\documentclass{article}

\PassOptionsToPackage{numbers, compress}{natbib}
\usepackage[final]{neurips_2024}

\usepackage[utf8]{inputenc} %
\usepackage[T1]{fontenc}    %
\usepackage[colorlinks=true,citecolor=black,urlcolor=black,linkcolor=black]{hyperref}       %
\usepackage{url}            %
\usepackage{booktabs}       %
\usepackage{amsfonts}       %
\usepackage{nicefrac}       %
\usepackage{microtype}      %
\usepackage{xcolor}         %
\usepackage[english]{babel}

\usepackage{graphicx}
\usepackage{subfigure}
\usepackage{multirow}
\usepackage{subcaption}
\usepackage{float}

\usepackage{amsmath}
\usepackage{mathrsfs}
\usepackage{amssymb}
\usepackage{mathtools}
\usepackage{amsthm}
\usepackage{trimclip} %
\usepackage{nicefrac} %
\usepackage{cancel} 
\usepackage{wrapfig}
\usepackage{tabularx}

\newcommand{\sn}[2]{\ensuremath{{#1}\times 10^{#2}}}

\usepackage[capitalize,noabbrev]{cleveref}

\crefname{section}{Sec.}{Secs.}
\crefname{proposition}{Prop.}{Props.}
\crefname{lemma}{Lem.}{Lems.}
\crefname{model}{Mod.}{Mods.}
\crefname{appendix}{App.}{Apps.}
\crefname{algorithm}{Alg.}{Algs.}
\crefname{equation}{Eqn.}{Eqns.}
\crefname{figure}{Fig.}{Figs.}
\crefname{example}{Ex.}{Exs.}

\theoremstyle{plain}
\newtheorem{theorem}{Theorem}[section]

\theoremstyle{definition}

\theoremstyle{remark}

\theoremstyle{definition}
\newtheorem{example}{Example}[section]

\usepackage[textsize=tiny]{todonotes}

\newcommand{\mathdefault}[1][]{} %

\usepackage{layouts} %
\usepackage{tikz,pgfplots,pgfmath}

\newcommand\inputpgf[2]{{
	\let\pgfimageWithoutPath\pgfimage
		\renewcommand{\pgfimage}[2][]{\pgfimageWithoutPath[##1]{#1/##2}}
	\let\includegraphicsWithoutPath\includegraphics
		\renewcommand{\includegraphics}[2][]{\includegraphicsWithoutPath[##1]{#1/##2}}
\input{#1/#2}
}}

\newcommand{\resizeinpgf}[3][\textwidth]{
    \resizebox{#1}{!}{\inputpgf{#2}{#3}}
}

\usepackage[nolist,smaller]{acronym}
\usepackage{xspace}
\usepackage{xcolor}

\definecolor{black}{HTML}{000000}
\definecolor{orange}{HTML}{E69F00}
\definecolor{blue}{HTML}{56B4E9}
\definecolor{green}{HTML}{009E73}
\definecolor{dark_blue}{HTML}{0072B2}
\definecolor{dark_orange}{HTML}{D55E00}
\definecolor{pink}{HTML}{CC79A7}
\definecolor{white}{HTML}{111111}
\definecolor{grey}{HTML}{808080} %
\usepackage{bm}

\newcommand{\mbf}[1]{\mathbf{#1}}

\newcommand{\mtrian}{{\raisebox{-0.25ex}{\clipbox{0em 1.25ex 0em 0em}{$\triangleq$}}}}
\newcommand{\defeq}{\overset{\mtrian}{=}}

\newcommand{\permeq}{\widetilde{=}} %

\newcommand{\KL}{\mathcal{KL}}
\newcommand{\LL}{\mathcal{L}}

\newcommand{\vu}{\mbf{u}}
\newcommand{\vf}{\mbf{f}}
\newcommand{\vx}{\mbf{x}}
\newcommand{\vy}{\mbf{y}}

\newcommand{\vg}{\mbf{g}}
\newcommand{\vm}{\mbf{m}}

\newcommand{\vv}{\mbf{v}}
\newcommand{\vr}{\mbf{r}}

\newcommand{\MA}{\mbf{A}}

\newcommand{\MJ}{\mbf{J}}
\newcommand{\MP}{\mbf{P}}
\newcommand{\MV}{\mbf{V}}
\newcommand{\MY}{\mbf{Y}}
\newcommand{\MX}{\mbf{X}}
\newcommand{\MZ}{\mbf{Z}}
\newcommand{\MW}{\mbf{W}}
\newcommand{\MU}{\mbf{U}}
\newcommand{\MH}{\mbf{H}}
\newcommand{\MS}{\mbf{S}}

\newcommand{\MK}{\mbf{K}}
\newcommand{\MI}{\mbf{I}}

\newcommand{\N}{\mathrm{N}}
\newcommand{\T}{\top}    %
\newcommand{\R}{\mathbb{R}}    %
\newcommand{\BO}{\mathcal{O}}    %

\newcommand{\GP}{\mathcal{GP}}
\newcommand{\E}{\mathbb{E}}    %

\newcommand{\MF}{\mbf{F}}

\newcommand{\blkdiag}{\text{blkdiag}}

\DeclareMathOperator*{\argmax}{arg\,max}

\newcommand{\kron}{\raisebox{1pt}{\ensuremath{\:\otimes\:}}} %

\newcommand{\dif}[2]{\mathrm{\frac{\mathop{\mathrm{d}#1}}{\mathop{\mathrm{d}#2}}}}

\newcommand{\diff}[2]{\mathrm{\frac{\partial\mathit{#1}}{\partial\mathit{#2}}}}
\newcommand{\diffII}[2]{\mathrm{\frac{d^2\mathit{#1}}{d\mathit{#2}^2}}}
\newcommand{\pdiffII}[2]{\mathrm{\frac{\partial^2\mathit{#1}}{\partial\mathit{#2}^2}}}

\newcommand{\mdiffII}[2]{\mathrm{\frac{d^2\mathit{#1}}{d^2\mathit{#2}}}}
\newcommand{\pmdiffII}[3]{\mathrm{\frac{\partial^2\mathit{#1}}{\partial\mathit{#2} \, \partial\mathit{#3}}}}

\newcommand{\hess}[2]{\mathrm{\frac{\partial^2\mathit{#1}}{\partial\mathit{#2} \, \partial\mathit{#2}^\T}}}

\newcommand{\vectext}{\text{vec}}
\newcommand{\Tr}{\text{Tr}}

\newcommand{\NN}{ {\cal N} }

\renewcommand{\mid}{\,|\,}

\newcommand{\nE}[2]{\E_{\, #1} \left[\, #2 \,\right]}
\newcommand{\nEtight}[2]{\E_{#1} \left[#2 \right]}

\newcommand{\nN}[3]{\N \left(\, #1 \, \mid \, #2, \, #3 \,\right)}
\newcommand{\nGauss}[2]{\N \left(\, #1, \, #2 \,\right)}

\newcommand{\nSin}[1]{\sin \left(\, #1 \, \right)}

\newcommand{\nKL}[2]{\KL \left[\, #1 \, ||\, #2 \,\right]}
\newcommand{\nBO}[1]{\BO \left( #1 \right)}

\newcommand{\nVec}[1]{\vectext (\, #1 \,)}
\newcommand{\nBlkdiag}[1]{\blkdiag \left(\, #1 \,\right)}
\newcommand{\nTr}[1]{\Tr \left[\, #1 \,\right]}

\newcommand{\acro}[1]{\textsc{#1}\xspace}

\newcommand{\RMSE}{\acro{rmse}}
\newcommand{\NLPD}{\acro{nlpd}}
\newcommand{\CRPS}{\acro{crps}}

\newcommand{\gp}{\acro{gp}}
\newcommand{\gps}{\textsc{gp}s\xspace}
\newcommand{\CVI}{\acro{cvi}}

\newcommand{\SVGP}{\acro{svgp}}
\newcommand{\LTISDE}{\acro{lti-sde}}

\newcommand{\EKS}{\acro{eks}}
\newcommand{\PMM}{\acro{pmm}}

\newcommand{\VI}{\acro{vi}}

\newcommand{\X}{\MX} %
\newcommand{\gridX}{\X^{(\mathrm{st})}} %
\newcommand{\Xs}{\X_{\SpaceIdx}} %
\newcommand{\Xt}{\X_{\TimeIdx}} %

\newcommand{\NumLatents}{\mathrm{Q}} %
\newcommand{\NumTasks}{\mathrm{P}} %
\newcommand{\NumData}{N} %
\newcommand{\SpaceIdx}{\mathrm{s}}
\newcommand{\TimeIdx}{\mathrm{t}}
\newcommand{\Ns}{\NumData_{\SpaceIdx}} %
\newcommand{\Nt}{\NumData_{\TimeIdx}} %

\newcommand{\ld}{\text{ld}} %
\newcommand{\dl}{\text{dl}} %

\newcommand{\D}{F}

\newcommand{\Ms}{M_\Space}

\newcommand{\HELMHOLTZ}{\acro{helmholtz-gp}} %

\newcommand{\Q}{Q} %
\renewcommand{\P}{P} %
\newcommand{\KLD}{\acro{kl}}
\newcommand{\ELL}{\acro{ell}}

\newcommand{\ELBO}{\acro{elbo}}

\newcommand{\RBF}{RBF\xspace}
\newcommand{\Matern}{Mat\'ern\xspace}

\newcommand{\MaternThreeTwo}{\Matern-\nicefrac{3}{2}\xspace}

\newcommand{\MaternSevenTwo}{\Matern-\nicefrac{7}{2}\xspace}

\newcommand{\ie}{\textit{i.e.}\xspace}

\newcommand{\wrt}{\textit{w.r.t.}\xspace}

\newcommand{\psd}{\textit{p.s.d}\xspace}

\newcommand{\RomanNumeral}[1]{\uppercase\expandafter{\romannumeral #1\relax}}

\newcommand{\AUTOIP}{\acro{autoip}}
\newcommand{\svgp}{\acro{svgp}}
\newcommand{\vgp}{\acro{vgp}}

\newcommand{\PhySS}{\acro{physs}}

\newcommand{\PIGP}{\acro{physs-gp}}
\newcommand{\vPIGP}{\PhySS-\vgp} %
\newcommand{\sPIGP}{\PhySS-\svgp} %
\newcommand{\hPIGP}{\PhySS-$\acro{vgp}_\acro{h}$\xspace} %
\newcommand{\hsPIGP}{\PhySS-$\acro{svgp}_\acro{h}$\xspace} %
\newcommand{\ekfPIGP}{\PhySS-\EKS} %
\newcommand{\StateIdx}{k}
\newcommand{\Nstate}{d}

\newcommand{\IWP}{\acro{iwp}}

\newcommand{\PDE}{\acro{PDE}}
\newcommand{\DE}{\acro{DE}}
\newcommand{\ODE}{\acro{ODE}}

\newcommand{\PDEs}{\textsc{PDE}s\xspace}
\newcommand{\DEs}{\textsc{DE}s\xspace}
\newcommand{\ODEs}{\textsc{ODE}s\xspace}

\newcommand{\param}{\boldsymbol\xi}
\newcommand{\natp}{\boldsymbol\lambda}
\newcommand{\apxnatp}{\boldsymbol{\widetilde{\lambda}}}
\newcommand{\priornatp}{\boldsymbol\eta}
\newcommand{\meanp}{\boldsymbol\mu}
\newcommand{\apxmeanp}{\widetilde{\boldsymbol\mu}}

\newcommand{\hyperparam}{\boldsymbol{\theta}}

\newcommand{\apxy}{\widetilde{\MY}}
\newcommand{\apxv}{\widetilde{\MV}}

\newcommand{\Time}{t}
\newcommand{\Space}{s}

\newcommand{\fproc}{f}
\newcommand{\f}{\vf}

\newcommand{\StateFProc}{\bar{f}}
\newcommand{\StateF}{\bar{\vf}}

\newcommand{\StateU}{\bar{\vu}}

\newcommand{\TransF}{\F}

\newcommand{\F}{\MF}

\newcommand{\DiffOp}{\mathcal{D}}
\newcommand{\DiffOpTilde}{\tilde{\mathcal{D}}}

\newcommand{\Diff}[1]{\DiffOp \, {#1}}
\newcommand{\TDiff}[1]{{#1} \, \DiffOp^{*}} %

\newcommand{\Diffd}[2]{\DiffOp_#1 \, {#2}}
\newcommand{\TDiffd}[2]{{#2} \, \DiffOp_{#1}^{*}} %

\newcommand{\DiffS}[1]{\Diffd{\SpaceIdx}{#1}}
\newcommand{\DiffT}[1]{\Diffd{\TimeIdx}{}{#1}}

\newcommand{\TDiffT}[1]{\TDiffd{\TimeIdx}{#1}}

\newcommand{\DiffKt}[2]{\MK^{\DiffOp}_\Time({#1}_\TimeIdx, {#2}_\TimeIdx)}
\newcommand{\DiffKs}[2]{\MK^{\DiffOp}_\Space({#1}_\SpaceIdx, {#2}_\SpaceIdx)}

\newcommand{\subDiffKs}[2]{\widetilde{\MK}^{\DiffOp}_\Space({#1}_\SpaceIdx, {#2}_\SpaceIdx)}

\newcommand{\KState}{\bar{\MK}}

\newcommand{\KT}{\MK_\Time}
\newcommand{\KS}{\MK_\Space}

\newcommand{\Nd}{D}
\newcommand{\Ndt}{d_\Time}
\newcommand{\Nds}{d_\Space}
\newcommand{\Nc}{C} %

\newcommand{\Zero}{\mathbf{0}}
\usetikzlibrary{calc,decorations.pathmorphing,patterns}
\pgfdeclaredecoration{penciline}{initial}{
    \state{initial}[width=+\pgfdecoratedinputsegmentremainingdistance,
    auto corner on length=1mm,]{
        \pgfpathcurveto%
        {%
            \pgfqpoint{\pgfdecoratedinputsegmentremainingdistance}
                      {\pgfdecorationsegmentamplitude}
        }
        {%
        \pgfmathrand
        \pgfpointadd{\pgfqpoint{\pgfdecoratedinputsegmentremainingdistance}{0pt}}
                    {\pgfqpoint{-\pgfdecorationsegmentaspect
                     \pgfdecoratedinputsegmentremainingdistance}%
                               {\pgfmathresult\pgfdecorationsegmentamplitude}
                    }
        }
        {%
        \pgfpointadd{\pgfpointdecoratedinputsegmentlast}{\pgfpoint{1pt}{1pt}}
        }
    }
    \state{final}{}
}

\newcommand{\Eye}{{\bm{I}}}
\newcommand{\TransW}{\MixingMatrixKron \, \StackedGram{\MX}{\MX} \, \MixingMatrixKron^\T}
\newcommand{\TransS}{\MixingMatrixKron \, \MS \, \MixingMatrixKron^\T}
\newcommand{\TransPhiInv}{\MixingMatrixKron \, \Phi^{-1} \, \MixingMatrixKron^\T}
\newcommand{\Stacked}[1]{\widetilde{#1}}
\newcommand{\StackedF}{\Stacked{\vf}}
\newcommand{\StackedGram}[2]{\Stacked{\MK}_{#1,#2}}
\newcommand{\MixingMatrix}{\MW}
\newcommand{\MixingMatrixKron}{\Stacked{\MW}}

\newcommand{\MultiOutputF}{\MF}

\title{Physics-Informed Variational State-Space \\ Gaussian Processes}

\author{%
  Oliver Hamelijnck\\
  University of Warwick\\
  \texttt{oliver.hamelijnck@warwick.ac.uk}
  \AND
  Arno Solin\\
  Aalto University\\
  \texttt{arno.solin@aalto.fi}\\
  \hphantom{\texttt{t.damoulas@warwick.ac.uk}}\\[-1em] %
  \And
  Theodoros Damoulas\\
  University of Warwick\\
  \texttt{t.damoulas@warwick.ac.uk}
}

\begin{document}

\maketitle
 
\begin{abstract}
  Differential equations are important mechanistic models that are integral to many scientific and engineering applications. With the abundance of available data there has been a growing interest in data-driven physics-informed models. Gaussian processes (GPs) are particularly suited to this task as they can model complex, non-linear phenomena whilst incorporating prior knowledge and quantifying uncertainty. Current approaches have found some success but are limited as they either achieve poor computational scalings or focus only on the temporal setting. This work addresses these issues by introducing a variational spatio-temporal state-space GP that handles linear and non-linear physical constraints while achieving efficient linear-in-time computation costs. We demonstrate our methods in a range of synthetic and real-world settings and outperform the current state-of-the-art in both predictive and computational performance.
\end{abstract}

\setlength{\columnsep}{8pt}
\setlength{\intextsep}{6pt}
\begin{wrapfigure}{r}{.33\textwidth}
  \vspace*{-2em}
  \centering\scriptsize

  \textbf{Na\"ive vs.\ Physics-Informed State-Space \gp (\PIGP)}\\[1em]
  \begin{tikzpicture}[decoration=penciline, decorate]

    \def\scale{2em}

    \definecolor{white}{HTML}{ffffff}
    \tikzstyle{blob}=[fill=blue,inner sep=2pt,circle,opacity=.5]      
    \tikzstyle{label}=[font=\scriptsize]  
    \tikzstyle{arr}=[->,-latex,line width=1pt]  
    \tikzstyle{line}=[<->,line width=1pt,draw=black!80]  
    \tikzstyle{line2}=[line width=1pt,draw=black!80]  

    \foreach \x in {0,1,2,4,5,6}
      \foreach \y in {0,...,6}
        \node[blob] (\x-\y) at (\scale*\x,\scale*\y) {};

    \foreach \x in {4,...,6}
      \draw[draw=blue,line width=2pt] (\x*\scale,0) -- (\x*\scale,6*\scale);

    \foreach \opa in {0.1,0.3,...,1.0}
       \node[fill=blue,opacity=\opa,text=black,rounded corners=3pt,inner sep=2em*(1-\opa)] at (3*\scale,4*\scale) {$\diff{f}{t} - \NN_\theta \, f = 0$};

    \draw[line] (0-2) -- (2-0);
    \draw[line] (0-0) -- (2-2);
    \draw[line] (0-1) -- (2-1);
    \draw[line] (1-0) -- (1-2);

    \draw[line2] (4-0) -- (4-2);
    \draw[line2,->] (4-1.center) -- (6-1);
    
    \node at (3*\scale,1*\scale) {vs.};

    \draw[arr] (-1*\scale,0) -- node[midway,rotate=90,fill=white]{Input, $x$} (-1*\scale,6*\scale);
    \draw[arr] (0,-1*\scale) -- node[midway,fill=white]{Time, $t$} (6*\scale,-1*\scale);
        
  \end{tikzpicture}
  \caption{The state-space formalism allows for linear-time inference in the temporal dimension.}
  \vspace*{-8pt}
  \label{fig:teaser}
\end{wrapfigure}
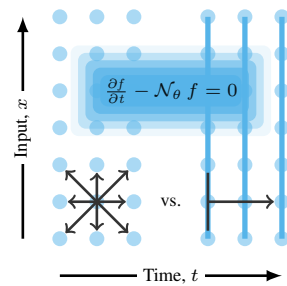
\section{Introduction}
Physical modelling is integral in modern science and engineering with applications from climate modelling \citep{stocker_introduction_to_climate_modelling:2011} to options pricing \citep{black_options:1973}. Here, the key formalism to inject mechanistic physical knowledge are differential equations (\DEs), which given initial and/or boundary values, are typically solved numerically \citep{borthwick2017introduction}. In contrast machine learning is data-driven, and aims to learn latent functions from observations. However the increasing availability of data has spurred interest in combining these traditional mechanistic models with data-driven methods through physics-informed machine learning. These hybrids approaches aim to improve predictive accuracy, computational efficiency by leveraging both physical inductive biases with observations \cite{karniadakis2021physics, pan_dce:2021}.

A principled way to incorporate prior physical knowledge is through Gaussian processes (\gps). \gps are stochastic processes and are a data-centric approach that facilitates the quantification of uncertainty. Recently \AUTOIP was proposed in order to integrate non-linear physics into \gps \citep{long_autoip:2022}, where solutions to ordinary and partial differential equations (\ODEs, \PDEs) are observed at a finite set of collocation points. This is an extension of the probabilistic meshless method (\PMM, \citep{cockayne_meshless_pde:2017}) to the variational setting such that non linear equations can be incorporated. Similarly,  \citep{berlinghieri_helmholtz_gp:2023} introduced \HELMHOLTZ, that constructs \gp priors that adhere to curl and divergence-free constraints. Such properties are required for the successful modelling of electromagnetic fields \citep{solin_ambient_magnet:2018} and ocean currents through the Helmholtz decomposition \citep{berlinghieri_helmholtz_gp:2023}. These approaches enable the incorporation of physics but incur a cubic computational complexity from needlessly computing full covariance matrices, as illustrated in \cref{fig:teaser}. For \ODEs (time-series setting), extended Kalman smoothers incorporate non-linear physics (\EKS)~\citep{tronarp_prob_ode_new_perspective:2018,kramer_stable_implementation:2024} and recover popular \ODE solvers whilst achieving linear-in-time complexity through state-space \gps \citep{schober_ode_runge_kutta:2014,hennig_probabilistic:2022}. 

In this work we propose a unified physics informed state-space \gp (\PIGP) that is a probabilistic models where mechanistic/physics knowledge is incorporated as an inductive bias. We can handle both linear and non-linear \PDEs and \ODEs whilst maintaining linear-in-time computational efficiency. We additionally derive a state-space variational inference algorithm that further reduces the computational cost in the spatial dimension. We recover \EKS, \PMM, and \HELMHOLTZ as special cases, and outperform \AUTOIP in terms of computational efficiency and predictive performance. In summary:

\begin{enumerate}
	\item We derive a state-space \gp that can handle spatio-temporal derivatives with a computational complexity that is  linear in the temporal dimension. 
	\item With this we derive a unifying state-space variational inference framework that allows the incorporation of both linear and non-linear PDEs whilst achieving a linear-in-time complexity and recovering state-of-the-art methods such as \EKS, \PMM and \HELMHOLTZ.
	\item We further explore three approximations, namely a structured variational posterior, spatial sparsity, and spatial minibatching, that reduce the cubic spatial computational costs to linear.
	\item We showcase our methods on a variety of synthetic and real-world experiments and outperform the current state-of-the-art methods \AUTOIP and \HELMHOLTZ both in terms computational and predictive performance. 
\end{enumerate}
Code to reproduce experiments is available at \url{https://github.com/ohamelijnck/physs_gp}.

\section{Background on Gaussian Processes}

\paragraph{Gaussian processes}
A GP is a distribution on an infinite collection of random variables such that any finite subset is jointly Gaussian \citep{rasmussen_gpml:2006}. Given observations $\MX \in \R^{N \times \D}$ and $\vy \in \R^{N}$ then
\begin{equation}
	p(\vy, f \mid \hyperparam) = \textstyle\prod^N_n p(y_n \mid f(\vx_n), \hyperparam) \, p(f \mid \hyperparam)
\end{equation}
is a joint model where $p(f \mid \hyperparam)$ is a zero mean GP prior with kernel $\MK(\cdot, \cdot)$, $f(\MX) \sim  p(f(\MX) \mid 0, \MK(\MX, \MX))$, and $\hyperparam$ are (hyper) parameters. We are primarily concerned with the spatio-temporal setting where we observe $\Nt$ temporal and $\Ns$ spatial observations $x_{\TimeIdx, \SpaceIdx} \in \R, \, y_{\TimeIdx, \SpaceIdx} \in \R$ on a spatio-temporal grid. Under a Gaussian likelihood, all quantities for inference and training are available analytically and, na\"ively, carry a dominant computational cost of $\BO((\Nt \, \Ns)^3)$.
For time series data, an efficient way to construct a GP over $f$ (and its time derivatives) is through the state-space representation of GPs. Given a Markov kernel, the temporal GP prior can be written as the solution of a discretised linear time-invariant stochastic differential equation (\LTISDE), which at time $k$ is
\begin{equation}
	\StateF_{\StateIdx+1} = \MA \, \StateF_\StateIdx + q_\StateIdx \quad \text{and} \quad y_\StateIdx \mid \StateF_\StateIdx \sim p(y_\StateIdx \mid \MH \, \StateF_\StateIdx),
\label{eqn:state_space_gp_form}
\end{equation}
where $\MA$ is a transition matrix, $q_\StateIdx$ is Gaussian noise, $\MH$ is an observation matrix, and $\StateF$ is a $\Nstate$-dimensional vector of temporal derivatives $\StateFProc = [\fproc(\cdot), \diff{\fproc(\cdot)}{x}, \pdiffII{\fproc(\cdot)}{x}, \cdots]^\T$. With appropriately designed states, matrices and densities, SDEs of this form represent a large class of \gp models, and Kalman smoothing enables inference in $\BO(\Nt \, \Nstate^3)$, see \citep{sarkka2019applied}. In the spatio-temporal setting, when the kernel matrix decomposes as a Kronecker product $\MK = \MK_t \kron \MK_s$, then with a Markov time kernel, a state space form is admitted. This takes a particularly convenient form where the state is $\StateF_t = [\StateFProc((\MX_s)_1, t), \cdots, \StateFProc((\MX_s)_{Ns}, t)]^\T$, 
and inference requires $\BO(\Nt (\Ns \, \Nstate)^3)$, see \citep{solin_thesis:2016}. 

\paragraph{Derivative Gaussian processes}
One main appeal of \gps is that they are closed under linear operators. Let  $\Diff{[\cdot]} = \DiffT{\DiffS{[\cdot]}}$ be linear functional that computes $\Nd = \Ndt \, \Nds $ space-time derivatives with $\DiffT{[\cdot]} = \left[ \cdot, \diff{\cdot}{t},  \pdiffII{\cdot}{t}, \cdots \right]$ and $\DiffS{[\cdot]}=\left[ \cdot, \diff{\cdot}{s},  \pdiffII{\cdot}{s}, \cdots \right]$, then at a finite set of index points, the joint prior between $\f$ and its time and spatial derivatives is
\begin{equation}
	p(\StateFProc(\MX))= \nN{
		\Diff{\vf}
	}{\Zero}{\TDiff{\Diff{\MK(\MX, \MX)}}}	
\label{eqn:background_derivative_gp}
\end{equation}
where $\StateFProc(\MX) = \Diff{\fproc(\MX)}$ and $\Diff{}^{*}$ is the adjoint of $\Diff{}$, meaning it operates on the second argument of the kernel \citep{sarkka_spde:2011}. When jointly modelling a single time and space derivative ($\Ndt=\Nds=1$) the latent functions are $\StateF = [\vf, \diff{\f}{\SpaceIdx}, \diff{\f}{\TimeIdx}, \pmdiffII{\f}{\TimeIdx}{\SpaceIdx}]^\T$ and the kernel is
\begin{equation*}
	\bar{\MK} = \TDiff{\Diff{\MK(\MX, \MX)}} = \left[\begin{matrix}
 		\MK & - & - & - \\	
 		\diff{}{s} \, \MK & \diff{}{s} \, \MK \, \diff{}{s}^\T & - & - \\	
 		\diff{}{t} \, \MK & \diff{}{t} \, \MK \, \diff{}{s}^\T & \diff{}{t} \, \MK \, \diff{}{t}^\T & - \\	
 		\pmdiffII{}{t}{s} \, \MK & \pmdiffII{}{t}{s} \, \MK \, \diff{}{s}^\T & \pmdiffII{}{t}{s} \, \MK \, \diff{}{t}^\T & \pmdiffII{}{t}{s} \, \MK \pmdiffII{}{t}{s}^\T
	\end{matrix} \right].
\end{equation*}
This is a multi-output prior whose samples are paths of $f$ with its corresponding derivatives. This prior is commonly known as a derivative \gp and has found applications in monotonic GPs \citep{riihimaki_monotonic_gp:2010}, input-dependent noise \citep{mchutchon_input_noise:2011,villacampa_noisy_input_gps:2021} and explicitly modelling derivatives \citep{solak_diff_gp:2002,eriksson_scaling_diff_gps:2018,padidar_scaling_diff_info:2021}. State-space \gps can be employed in the temporal setting since the underlying state computes $f(\vx)$ with its corresponding time derivatives.  In \cref{sec:st_diff_obs}, we extend this to the spatio-temporal setting.

\section{Physics-Informed State-Space Gaussian Processes (\PIGP)}
\label{sec:phi_gp_methods}

We now propose a flexible generative model for incorporating information from both data observations and (non-linear) physical mechanics. We consider general non-linear evolution equations of the form
\begin{equation}
	g(\NN_\theta \, f) = \diff{f}{t} - \NN_\theta \, f = 0 
\label{eqn:phi_gp_general_pde}
\end{equation}
with appropriate boundary conditions, where $f: \R^\D \to \R$ is the latent quantity of interest and $\NN_\theta$ is a non-linear differential operator \citep{raissi_pinn:2019}. We assume that $g: \R^{\P \cdot \Nd} \to \R$ is measurable, and is well-defined such that there are sensible solutions to the differential equation \cite{hennig_probabilistic:2022}. We wish to place a \gp prior over $f$ and update our beliefs after `observing' that it should follow the solution of the differential equation. In general  this is intractable and can only be handled approximately. By viewing  \cref{eqn:phi_gp_general_pde} as a loss function that measures the residual between $\diff{f}{t}$ and the operator $\NN_\theta \, f$ then the right hand side ($0$) are virtual observations. The \PDE can now be observed at a finite set of locations known as collocation points. This is a soft constraint (\ie  $\vf$ is not guaranteed to follow the differential equation), but it can handle non-linear and linear mechanisms. However, there are special cases, namely curl and divergence-free constraints, that can be solved exactly. This follows from properties of vectors fields, where $f$ defines a potential function where linear combinations of its partial derivatives define vector fields that enforce these properties. To handle both of these situations  we propose the following generative model
\begin{gather}
	\underbrace{\vphantom{\MF = \MW \, \left[\, \StateFProc_q \,\right]^\T} \TransF_n = \MW \cdot \left[\, \StateFProc_q(\MX_n)\,\right]^\T}_{\text{Linear Mixing}}, ~~  
	\underbrace{\vphantom{\MF = \MW \cdot \left[\vf_q\right]^\T} \StateFProc_q \sim \GP(\Zero, \KState_q) }_{\text{Independent \gp Priors}}, \\
	\underbrace{\vphantom{\MF = \MW \cdot \left[\vf_q\right]^\T} \vy^{(\mathcal{O})}_n= \MH_\mathcal{O} \, \TransF_n + \epsilon_{\mathcal{O}}}_{\text{Data}}, ~~
	\underbrace{\vphantom{\MF = \MW \cdot \left[\vf_q\right]^\T} \Zero^{(\mathcal{C})}_n = g(\TransF_n)}_{\text{Collocation Points}}, ~~
	\underbrace{\vphantom{\MF = \MW \cdot \left[\vf_q\right]^\T} \vy^{(\mathcal{B})}_n = \MH_\mathcal{B} \, \TransF_n + \epsilon_{\mathcal{B}}}_{\text{Boundary Values}},
\label{eqn:phi_gp_general_generative_model}
\end{gather}
where $\StateFProc_q$ are derivative \gps (see \cref{eqn:background_derivative_gp}) that are linearly mixed by $\MW \in \R^{(\NumTasks \, \Nd) \times (\NumLatents \, \Nd)}$, and $\MY^{(\mathcal{O})}, \Zero^{(\mathcal{C})} \in \R^{\N \times \NumTasks}$ are observations and collocation points over the $\NumTasks$ outputs and $\MY^{(\mathcal{B})} \in \R^{\N \times (\NumTasks \, \Nd)}$ are boundary values over the derivatives of each output. The observation matrices $\MH_\mathcal{O}, \MH_\mathcal{B}$ simply select the relevant parts of $\TransF_n$. For further details on notation see \cref{sec:app_vi_approx_deriv}. In many case we want to observe the solution of the differential equation exactly, however in some cases it may be required to add observation noise $\epsilon_{\mathcal{C}}$ to the collocation points, whether for numerical reasons or to model inexact mechanics. 
This is a flexible generative model where different assumptions and approximations will lead to various physics informed methods such as \AUTOIP, \EKS, \PMM, and \HELMHOLTZ that we will develop state space algorithms for. Additionally it is possible to learn missing physics by parameterising unknown terms in \cref{eqn:phi_gp_general_pde} through the \gp priors in \cref{eqn:phi_gp_general_generative_model} (see \cref{sec:app_unknown_physics}). 

\begin{example}[\EKS Prior and \PMM] \label{example:ode_eks_prior}
	We recover \EKS style generative models (see \citet{hennig_probabilistic:2022}) when the mixing weight is identity $\MW = \MI$, and $\epsilon_{\mathcal{C}}, \epsilon_{\mathcal{B}} \rightarrow 0$, and the non-linear transform $g$ is linearised. Let the prior be Markov $p(\StateF) = \prod^{\Nt}_{k} p(\StateF_k \mid \StateF_{k-1})$ with marginals $p(\StateF_k) = \nN{\StateF_k}{\vm^{-}_k}{\MP^{-}_k}$. By taking a first-order Taylor linearisation $g(\StateF_k) \simeq g(\vm^{-}_k) + \diff{g(\vm^{-}_k)}{\vm^{-}_k} \, \delta \StateF_k$ with $\delta \StateF_k \sim \nGauss{\Zero}{\MP^{-}_k}$ the joint is
	\begin{equation}
		p(\left[ \begin{matrix}
			\StateF_k \\
			\vg_k
		\end{matrix} \right]) \simeq \nN{
			\left[ \begin{matrix}
				\StateF_k \\
				\vg_k
			\end{matrix} \right]
		} {
			\left[ \begin{matrix}
				\vm^{-}_k \\
				g_k(\vm^{-}_k)
			\end{matrix} \right]
		} {
			\left[ \begin{matrix}
				\MI \\
				\diff{g(\vm^{-}_k)}{\vm^{-}_k}
			\end{matrix} \right] \, \MP^{-}_k \, \left[ \begin{matrix}
				\MI \\
				\diff{g(\vm^{-}_k)}{\vm^{-}_k}
			\end{matrix} \right]^\T
		}.
	\end{equation}
	This is now a form that can directly be implemented into an extended Kalman smoothing algorithm \citep{sarkka_filtering:2013}. When $\NumLatents>1$ the state $\StateF$ is constructed by stacking the individual states of each latent \citep{sarkka2019applied}. With linear \ODEs \EKS coincides with \PMM.  
\end{example}

\begin{example}[\HELMHOLTZ and Curl and Divergence-Free Vector Fields in $2$D]
	Let $\vv=[v_t, v_{s_1}, v_{s_2}]$ denote a $3$D-vector field, then curl indicates the tendency of a vector field to rotate and divergence at a specific point indicates the tendency of the field to spread out. Curl and divergence-free fields follow
	\begin{equation}
	\begin{aligned}
		\nabla \times \vv &= 0 ~~ (\text{curl free}),  ~~
		\nabla \cdot \vv &= 0 ~~ (\text{div. free})
	\end{aligned}
	\label{eqn:phi_gp_curl_div_free}
	\end{equation}
	where $\nabla = [\diff{}{\TimeIdx}, \diff{}{\SpaceIdx_1}, \diff{}{\SpaceIdx_2}]$.  Two basic properties of vector fields state that the divergence of a curl field and the curl of a derivative field are zero \citep{arfken_mathematical:2011}. Let $[f_1, f_2]$ be scalar potential functions then
	\begin{equation}
		\vv_{\text{curl}} = \nabla f_1 ~~ (\text{curl free}), ~~
		\vv_{\text{div}} = \nabla \times \nabla \, f_2  ~~ (\text{div. free})
\label{eqn:phi_gp_constrained_vector_field}
	\end{equation}
	define curl and divergence-free fields. In $2$D this simplifies to using the \emph{grad} and \emph{rot} operators over $\vv=[v_{s_1}, v_{s_2}]$ (see \citep{berlinghieri_helmholtz_gp:2023}). Placing \gp priors over $f_q$ we incorporate this into \cref{eqn:phi_gp_general_generative_model} by defining 
	\begin{equation}
		\MW_{\text{grad}} = \left[ \begin{matrix}
			1 & 0 \\
			0 & 1 
		\end{matrix} \right] \, \MH, ~~ \MW_{\text{rot}} = \left[ \begin{matrix}
			0 & 1 \\
			-1 & 0 \\
		\end{matrix} \right] \, \MH ~~ \text{where} ~~\MH ~~ \text{selects} ~~\left[\diff{f}{s_1}, \diff{f}{s_2} \right].
	\end{equation}
	\HELMHOLTZ is defined as the sum of \gp priors over $2$D curl and divergence-free fields \citep{berlinghieri_helmholtz_gp:2023}.
\end{example}

\subsection{A Spatio-Temporal State-Space Prior} \label{sec:st_diff_obs}

The generative model in \cref{eqn:phi_gp_general_generative_model} contains two complications: i) it includes potential non-lineararities, and ii) the independent priors are defined over latent functions with their partial derivatives which substantially increases the computational complexity. We wish to tackle both issues through state-space algorithms that are linear-in-time. We begin by deriving a state-space model that observes derivatives across space and time (see  \cref{sec:app_pigp_timeseries_setting} for the simpler time-series setting). In \cref{sec:pigp_cvi} we further derive a state-space variational lower bound that will enable computational speeds up in the spatial dimension. 

First, we show how Kronecker structure in the kernel allows us to rewrite the model as the solution to an \LTISDE. From the definition of $\DiffOp$, the separable covariance matrix has a repetitive structure that can be represented through a Kronecker product. The gram matrix is 
\begin{equation}
	\TDiff{\Diff{\MK(\vx, \vx)}} = \MK^{\DiffOp}_{\TimeIdx}(\vx_\TimeIdx, \vx_\TimeIdx) \, \kron \, \MK^{\DiffOp}_{\SpaceIdx}(\vx_\SpaceIdx, \vx_\SpaceIdx)
\end{equation}
where $\MK^{\DiffOp_\cdot}_\cdot = \left[\begin{matrix} \MK_{\cdot} &  \MK_{\cdot} \, {\DiffOpTilde_\cdot}^{*} \\ {\DiffOpTilde_\cdot} \, \MK_{\cdot} & {\DiffOpTilde_\cdot} \, \MK_{\cdot} \, {\DiffOpTilde_\cdot}^{*} \end{matrix} \right]$ and  $\DiffOpTilde_\cdot [\cdot] = (\DiffOp_\cdot [\cdot])_{1:}$ excludes the underlying latent function.  To find a Kronecker form of the gram matrix over $\MX$, we will exploit the fact that $\MX$ is on a spatio-temporal grid and that the kernel is separable. Due to the separable structure a derivative over either the spatio (or temporal) dimension only affects the corresponding kernel, and so when considering $\MX$, the gram matrix is still Kronecker structured:
\begin{equation}
	\diff{}{s} \, \MK(\vx, \vx) =  \MK_\Time(\vx, \vx) \cdot \diff{}{s} \, \MK_\Space(\vx, \vx) \Rightarrow
	\diff{}{s} \, \MK(\MX, \MX) =  \MK_\Time(\MX_\Time, \MX_\Time) \kron \diff{}{s} \, \MK_\Space(\MX_\Space, \MX_\Space).
\end{equation}
The full prior over (a permuted) $\MX$ is now given as
\begin{equation*}
	p(\StateFProc(\MX)) \, \permeq \, \nGauss{\Zero}{\MK_t^{\DiffOp}(\MX_\Time, \MX_\Time) \kron \MK_s^{\DiffOp}(\MX_\Space, \MX_\Space)}.
\end{equation*}
This is the form of a spatio-temporal Gaussian process with derivative kernels that can be immediately cast into a state-space form as in \cref{eqn:state_space_gp_form} where $\MH = \MI$, as we want to observe the whole state, not just $\vf$. 
The marginal likelihood and the \gp posterior can now be computed using standard Kalman filtering and smoothing algorithms with a computational time of $O(\Nt \cdot (\Ns \cdot \Nds \cdot d)^3)$. Inference in \PIGP now follows \cref{example:ode_eks_prior} by recognising that the filtering state consists of the spatial points with there spatio-temporal derivatives. The \EKS prior in \cref{example:ode_eks_prior} can now be simply extended to the \PDE setting by placing colocation points on a spatio-temporal grid \citep{kramer_pde_eks:2022}.

\subsection{A State-Space Variational Lower Bound (\vPIGP and \ekfPIGP)}   \label{sec:pigp_cvi}

We now derive a variational lower bound for \PIGP that maintains the computational benefits of state-space \gps. This acts as an alternative way of handling the non-linearity of $g$ in \cref{eqn:phi_gp_general_generative_model}, and will also enable the reduction of the cubic spatial computation complexity in \cref{sec:reducing_spatial_complexity}. We start by focusing on the single latent function setting ($\NumLatents=1$) and collect all terms that relate to observations in \cref{eqn:phi_gp_general_generative_model} with $p(\MY \mid \StateF) = \prod^N_n \, p(\vy^{(\mathcal{O})}_n | \MH_\mathcal{O} \, \TransF_n) \, p(\Zero^{(\mathcal{C})}_n | g(\TransF_n)) \, p( \vy^{(\mathcal{B})}_n | \MH_\mathcal{B} \, \TransF_n )
$. \VI frames inference as the minimisation of the Kullback–Leibler divergence  between the true posterior and an approximate posterior, which leads the optimisation of the  \ELBO \citep{hoffman_svi:2013}:
\begin{equation}
	\argmax_{q(\StateF \mid \param)} \, \LL = \nE{q(\StateF)}{
		\log \frac{
			p(\MY \mid \StateF) \, p(\StateF)
		} {
			q(\StateF)
		}
	}
\label{eqn:general_vi_elbo}
\end{equation}
where we define the approximate posterior $q(\vf \mid \param) \defeq \nN{\vf}{\vm}{\MS}$ as a free-form Gaussian with $\param = (\vm, \MS)$ and $\vm \in \R^{D\,N \times 1}$, $\MS \in \R^{D\,N \times D\,N}$. The aim is to represent the approximate posterior as a state-space \gp posterior, which will enable efficient computation of the whole evidence lower bound (\ELBO). We will achieve this through the use of natural gradients. The natural gradient preconditions the standard gradient with the inverse Fisher matrix, meaning the information geometry of the parameter space is taken into account, leading to faster convergence and superior performance \citep{amari_natgrad:1996,khan_cvi:2017,hensman_gp_for_big_data:2013}. For Gaussian approximate posteriors the natural gradient has a simple form \citep{hensman_fast_vi:2012}
\begin{equation}
\begin{aligned}
	\natp_{k} &= \lambda_{k-1} + \beta \, \diff{\LL}{\meanp_{k}} = 
		(1-\beta) \, \apxnatp_{k-1} + \beta \, \diff{\ELL}{\meanp_{k}} + \priornatp = \apxnatp + \priornatp
\end{aligned}
\label{eqn:nat_param_cvi_update}
\end{equation}
where $\natp = (\MS^{-1} \vm, \nicefrac{1}{2} \, \MS^{-1})$ and $\meanp = (\vm, \vm \, \vm^\T + \MS)$ are the natural and expectation parameterisations. This is known as conjugate variational inference (\CVI) as $\apxnatp$ represent the natural parameters for the conjugate prior $\priornatp$ \citep{khan_cvi:2017,chang_fast_vi:2020, hamelijnck_spatio_temporal:2021, wilkinson_bayes_newton:2021}. 
For now, we will assume that the likelihood is conjugate to ensure that $[\natp_k]_2$ is \psd, this will be relaxed in \cref{sec:ng_gauss_newton}. The derivative of the \ELL is 
\begin{equation}
	\diff{\text{ELL}}{[\meanp]_2} = 
		{\textstyle \sum^{\Nt, \Ns}_{\TimeIdx, \SpaceIdx}}
			\diff{}{[\meanp]_2} \, \nE{
				q
			}{
				\log p(
					\MY_{(\TimeIdx, \SpaceIdx)} \mid \StateF_{(\TimeIdx, \SpaceIdx)}
					)
			},
\end{equation}
where the expectation is under $q(\StateF_{(\TimeIdx, \SpaceIdx)})$, a $\Nd$ dimensional Gaussian over the spatio-temporal derivatives at location $\vx_{\TimeIdx, \SpaceIdx}$. Within the sum, the only elements of $[\meanp]_2$ whose gradient will propagate through the expectation are the $\Nd \times \Nd$ elements corresponding to these locations. These points are unique 
and so $\diff{\text{ELL}}{[\meanp]_2}$ has some (permutated) block-diagonal structure, hence \cref{eqn:nat_param_cvi_update} can be written as
\begin{equation}
\begin{aligned}
	q(\StateF) 
	&\propto {\textstyle \prod^{Nt}_t} \left[ \N(\apxy_t \mid \StateF_t, \apxv_t) \right] \, p(\StateF)
\end{aligned}
\label{eqn:pigp_cvi_dt_conjugate_form}
\end{equation}
where $\apxy_t$ is $\Nd$-dimensional. The natural gradient update, \ie $q(\StateF_t)$ in moment parameterisation, can now be computed using Kalman smoothing in $\BO(\Nt \cdot (\Ns \cdot \Nds \cdot d)^3)$. Collecting $\apxy = \nVec{[\apxy_t]},  \apxv=\nBlkdiag{[\apxv_t]}$, then the \ELBO can also be computed efficiently by substituting this form of $q(\StateF_t)$ in
\begin{equation} 	
\begin{aligned}
	\LL &= 
		\sum^{\Nt, \Ns}_{\TimeIdx, \SpaceIdx} \nE{q(\StateF_{(\TimeIdx, \SpaceIdx)})}{ \log p(
					\MY_{(\TimeIdx, \SpaceIdx)} \mid \StateF_{(\TimeIdx, \SpaceIdx)}
					)} - \sum^{Nt}_t\nE{q(\StateF_t)}{ \log \N(\apxy_t \mid \StateF_t, \apxv_t) } + \log p(\apxy \mid \apxv)
\end{aligned}
\end{equation}
where the first two terms only depend on $q(\Diff{\vf}_t)$ and the final term is simply a by-product of running the Kalman filter, leading to a dominant computational complexity of $\BO(N \cdot (\Ns \cdot \Nds \cdot d)^3)$. This cost is linear in the datapoints ($N$) because the expected log likelihood above decomposes across all spatio-temporal locations.  In summary we have shown that natural gradient is equivalent updating a block-diagonal likelihood that decomposes across time; hence the approximate posterior is computable via Kalman smoothing algorithms. Extending to multiple latent functions ($\NumLatents>1$) we define a full Gaussian approximate posterior that captures all correlations between the latent functions $q(\StateF_1, \cdots, \StateF_\NumLatents) \defeq \nN{\StateF_1, \cdots, \StateF_\NumLatents}{\vm}{\MS}$ where $\vm \in \R^{(N \times Q) \times 1}, \MS \in \R^{(N \times Q) \times (N \times Q)}$. All the observation models in \cref{eqn:phi_gp_general_generative_model} decompose across data points, hence \cref{eqn:pigp_cvi_dt_conjugate_form} is still block-diagonal and decomposes across time, except now each component is of dimension $\NumLatents \times \Nt$ as it encodes the correlations of spatial points and their spatio-temporal derivatives across the latent functions. We denote this model as \vPIGP and \ekfPIGP when using a \EKS prior (see \cref{example:ode_eks_prior}).

\begin{theorem}
	Let the approximate posterior be  (full) Gaussian $q(\StateF_1, \cdots, \StateF_\NumLatents) \defeq \nN{\StateF_1, \cdots, \StateF_\NumLatents}{\vm}{\MS}$ where $\vm \in \R^{(N \times Q) \times 1}, \MS \in \R^{(N \times Q) \times (N \times Q)}$. When $g$ is linear a single natural gradient step with $\beta=1$ recovers the optimal solution $p(\StateF_1, \cdots, \StateF_\NumLatents\mid \MY)$.
\label{thm:cvi_optimal}
\end{theorem}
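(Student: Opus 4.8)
The plan is to exploit the conjugacy that linearity of $g$ induces, and to show that with $\beta=1$ the natural-gradient update in \cref{eqn:nat_param_cvi_update} is a fixed-point map whose unique fixed point is reached in a single step. First I would specialise \cref{eqn:nat_param_cvi_update} to $\beta=1$, which collapses the convex combination and gives $\apxnatp = \diff{\ELL}{\meanp}$, so that the full natural parameter is $\natp = \apxnatp + \priornatp$; that is, the approximate likelihood contributes natural parameters equal to the gradient of the expected log-likelihood with respect to the expectation parameters $\meanp = (\E_q[\StateF], \E_q[\StateF \StateF^\T])$.

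The second step is to establish conjugacy. When $g$ is linear, $g(\TransF_n)$ is affine in the state, because the mixing $\TransF_n = \MW \StateF$ is linear and $g$ is linear; hence the collocation term $p(\Zero^{(\mathcal{C})}_n \mid g(\TransF_n))$ is a Gaussian whose mean is linear in $\StateF$. Together with the already-Gaussian data and boundary terms, the full observation model $p(\MY \mid \StateF)$ from \cref{eqn:phi_gp_general_generative_model} is Gaussian, and its log is quadratic in $\StateF$. Consequently $\ELL = \nE{q(\StateF)}{\log p(\MY \mid \StateF)}$ is an affine function of $\meanp$, so its gradient $\diff{\ELL}{\meanp}$ is a constant independent of the current $q$.

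This independence is the crux: since the gradient does not move as $q$ changes, iteration is unnecessary and a single $\beta=1$ step already sits at the stationary point of the \ELBO. It then remains to identify the resulting posterior. The constant $\apxnatp = \diff{\ELL}{\meanp}$ is precisely the natural parameter of $p(\MY \mid \StateF)$ read as an unnormalised exponential-family density in $\StateF$; therefore $q(\StateF) \propto \exp(\langle \apxnatp, T(\StateF) \rangle)\, p(\StateF) = p(\MY \mid \StateF)\, p(\StateF) \propto p(\StateF \mid \MY)$. Since a conjugate model has a Gaussian exact posterior, $p(\StateF_1, \dots, \StateF_\NumLatents \mid \MY)$ lies inside the full-Gaussian variational family, so it is the unique \ELBO maximiser and the single step has recovered it. The argument is unchanged for $\NumLatents>1$: every observation model in \cref{eqn:phi_gp_general_generative_model} decomposes across datapoints, and the full covariance $\MS$ represents the cross-latent correlations, so the block-diagonal-in-time structure of \cref{eqn:pigp_cvi_dt_conjugate_form} carries over.

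The main obstacle is making the affine-in-$\meanp$ claim airtight: I would need to verify carefully that linearity of $g$ (and of $\MW$) leaves no residual dependence of $\diff{\ELL}{\meanp}$ on the point at which it is evaluated. This is exactly the property that fails for the Taylor-linearised $g$ of \cref{example:ode_eks_prior}, whose Jacobian $\diff{g(\vm^{-}_k)}{\vm^{-}_k}$ does depend on $\vm^{-}_k$ and therefore forces iteration; contrasting the two cases is what pins down why a single $\beta=1$ step is exact precisely when $g$ is genuinely linear.
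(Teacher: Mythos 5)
Your proposal is correct, and it reaches the same core mechanism as the paper -- that for a conjugate (linear-Gaussian) model the natural-gradient step with unit step size is exactly the conjugate Bayesian update -- but it gets there by a genuinely different route. The paper's proof in \cref{sec:app_cvi_nat_grad_optimal} is an explicit verification: it writes down the exact posterior $p(\StackedF \mid \MY)$ for the linear multi-task model, simplifies its covariance with two applications of the Woodbury identity, reads off its natural parameters, then separately computes $\diff{\ELL}{\vm}$ and $\diff{\ELL}{\MS}$ for the Gaussian likelihood, converts to expectation-parameter gradients (observing that the $\vm$-dependence cancels in $\diff{\ELL}{[\meanp]_1} = \diff{\ELL}{\vm} - 2\,\diff{\ELL}{\MS}^\T\vm$), and checks term-by-term that adding the prior's natural parameters reproduces the posterior's. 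You instead argue structurally: linearity of $g$ and $\MW$ makes $\log p(\MY \mid \StateF)$ quadratic in $\StateF$, hence $\ELL$ is affine in $\meanp$, hence $\diff{\ELL}{\meanp}$ is a constant equal to the likelihood's natural parameters, and one step lands on $\priornatp$ plus that constant, i.e.\ on $p(\StateF \mid \MY)$, which lies in the Gaussian family and is therefore the \ELBO\ maximiser. Your argument is shorter, more general (it needs only that the log-likelihood is quadratic in the state and decomposes over data, not the specific mixing structure), and it isolates exactly why iteration is unnecessary -- the gradient does not move with $q$ -- which also cleanly explains, as you note, why the Taylor-linearised $g$ of \cref{example:ode_eks_prior} does not enjoy the same one-step property. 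What the paper's computation buys in exchange is the explicit closed form of the posterior natural parameters, which doubles as an implementation check. The only point to tighten in a written-up version is the bookkeeping identifying $\diff{\ELL}{[\meanp]_2}$ with the second natural parameter of the Gaussian likelihood factor under the paper's sign convention, and a remark on the degenerate-Gaussian limit $\epsilon_{\mathcal{C}} \rightarrow 0$ for noiseless collocation points; neither is a gap in substance.
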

We prove this in \cref{sec:app_cvi_nat_grad_optimal}. This result not only demonstrates the optimality of our proposed inference scheme in the linear Gaussian setting, but confirms that we recover batch models like \PMM and \HELMHOLTZ, as well as \EKS (see \cref{example:ode_eks_prior}). 

\section{Reducing the Spatial Computational Complexity} \label{sec:reducing_spatial_complexity}

We now propose three approaches that reduce the cubic computational complexity in the number of spatial derivatives and locations.  The first augments the process with inducing points that alleviate cubic costs associated with $\Ns$. The second is a structured variational approximation that defines the approximate posterior only over the temporal prior and alleviates cubic costs associated with $\Nds$. Finally, we introduce spatial mini-batching that alleviates linear $\Ns$ costs. When used in conjunction, the dominant computation cost is $\nBO{\Nt \cdot \Nds \cdot (\Ms \cdot \Ndt)^3}$. These approximations are not only useful for the state-space setting and can readily be applied to reduce the computational complexity for batch variational models (such as \AUTOIP). See \cref{sec:app_autoip_speedup} for more details.
\paragraph{Spatio-Temporal Inducing Points (\sPIGP)}

In this first approximation, denoted by \sPIGP, we augment the full prior $p(\StateF)$ with inducing points. By defining these inducing points on a spatio-temporal grid, we will show that we can still exploit Markov conjugate operations through natural gradients. Let $\StateU = \Diff{\vu} \in \R^{M \times \Nd}$ be inducing points at locations $\MZ \in \R^{M \times \D}$. From the standard \SVGP formulation \citep{hensman_gp_for_big_data:2013}, the \ELBO is 

\begin{equation}
	\LL = \nE{q(\StateF, \StateU)}{
		\log \frac{
			p(\MY \mid \StateF) \, p(\StateU)
		} {
			q(\StateU)
		}
	}
\end{equation}
where $q(\StateF, \StateU) \defeq p(\StateF \mid \StateU) \, q(\StateU)$. By defining the inducing points on a spatio-temporal grid at temporal locations $\MX_\TimeIdx \in \R^\Nt$ and spatial $\MZ_\SpaceIdx \in \R^{\Ms \times (\D-1)}$ then the marginal $p(\StateF \mid \StateU)$ is Gaussian with mean 
\begin{equation}
	\mu_{\MF \mid \MU} = \big[ \, \MI \kron \DiffKs{\MX}{\MZ}  \, (\DiffKs{\MZ}{\MZ})^{-1} \, \big] \, \StateU
\label{eqn:spigp_kronecker_conditional_mean}
\end{equation}
and variance given in \cref{eqn:app_sparse_kronecker_conditional_covariance}. This Kronecker structure allows us to again `decouple' space and time, leading to natural gradient updates with block size $\Ms \times \Nd$, reducing the computational complexity to $\BO(N \, (\Ms \cdot \Nds \cdot d)^3)$. For full details, see \cref{sec:app_st_cvi_sparsity_single_latent}.

\paragraph{Structured Variational Inference (\hsPIGP)}
This second approximation, denoted as \hsPIGP, defines the inducing points \emph{only} over the temporal derivatives. This is a useful approximation as it can drastically reduce the size of the filter state, making it more computationally and memory efficient. We begin by defining the joint prior as 
\begin{equation*}
	p(\MF, \DiffT{\vf}) = p(\MF \mid \DiffT{\vf}) \, p(\DiffT{\vf})
\end{equation*}
where $p(\MF \mid \DiffT{\vf})$ is a Gaussian conditional with mean
\begin{equation*}
	\E\left[\MF \mid \DiffT{\vf}\right] = \big[ \, 
		\MI \kron  \, \subDiffKs{\MX}{\MX} \, \MK_\Space(\MZ_s, \MZ_s)^{-1}	\, 
	\big]  \, \DiffT{\vf}, ~~ \text{with} ~~ \subDiffKs{\MX}{\MX} = \left[\begin{matrix}
 			\MK_\Space(\MX_s, \MZ_s) \\
 			\DiffS{\MK_\Space(\MX_s, \MZ_s)}
	\end{matrix} \right].
\end{equation*}
We then define a structured variational posterior
\begin{equation*}
	q(\StateF, \DiffT{\vf}) \defeq 
	p(\MF \mid \DiffT{\vf}) \, q(\DiffT{\vf}).
\end{equation*}
Substituting this into the \ELBO we see that all the terms with the prior spatial derivatives cancel
\begin{equation*}
	\nE{q}{
		\log \frac{
			p(\MY \mid \StateF) \, \cancel{p(\StateF \mid \DiffT{\vf})} \, p(\DiffT{\vf})
		} {
			\cancel{p(\MF \mid \DiffT{\vf})} \, q(\DiffT{\vf})
		}
	}
\end{equation*}
Again, the marginal $q(\DiffT{\vf})$ maintains Kronecker structure, enabling Markov conjugate operations, leading to a computational cost of $\BO(N \cdot \Nds \cdot (\Ns \cdot d)^3)$, see \cref{sec:app_st_cvi_structured_q_single_latent}. These variational approximations can simply be applied to non-state-space variational approximation, see \cref{sec:app_autoip_speedup}.
\paragraph{Spatial Mini-Batching}

A standard approach for handling big data is through mini-batching where the \ELL is approximated using only a data subsample \citep{hensman_gp_for_big_data:2013}. Directly appling mini-batching would be of little computation benefit because computation of the \ELBO requires running a Kalman smoother that iterates through all time points. Instead, we mini-batch by subsampling $B_{\SpaceIdx}$ spatial points
\begin{equation}
\begin{aligned}
	\ELL 
	 & \approx \sum^{\Nt}_{\TimeIdx} \frac{\Ns}{B_{\SpaceIdx}} \sum^{B_{\SpaceIdx}}_{i} \nE{q}{\log p(\MY_{\TimeIdx, \SpaceIdx} \mid \StateF_{\TimeIdx, i})}
\end{aligned}
\end{equation}
where $i$ is uniformly sampled. We used in conjunction with \sPIGP and \hsPIGP, 
this results in dominant costs of $\BO(\Nt \, (\Ms \cdot \Nds \cdot d)^3)$ and $\nBO{\Nt \cdot \Nds \cdot (\Ms \cdot d)^3}$ when $B_s \ll \Ns$.

\section{Handling the PSD Constraint} \label{sec:ng_gauss_newton}

As discussed in \cref{sec:pigp_cvi} when the differential equation is non-linear, the model is no longer conjugate and the resulting natural gradients are not guaranteed to result in \psd updates. This issue has received some attention in the literature \citep{salimbeni_natgrad:2018, tran_vb_manifold:2019, lin_psd_constraint:2020}, but these approaches do not maintain an efficient conjugate representation. 
One distinction is \cite{wilkinson_bayes_newton:2021}, which uses the Gauss-Newton approximation to maintain conjugate operations.
We now extend this to support spatial inducing points and non-linear transformations. Due to space we focus on \sPIGP, but see  \cref{sec:app_gauss_newton} for further  details. The troublesome term for the natural gradient update in \cref{eqn:nat_param_cvi_update} is the Jacobian of the \ELL \wrt to the second expectation parameter;
which is not guaranteed to be \psd unless the \ELL is log convex \citep{lin_psd_constraint:2020}. Focusing at a single location $n=(t,s)$:
\begin{equation*}
\begin{aligned}
	\diff{\ELL_n}{[\meanp_{k}]_2} = \diff{}{\MS_u}\nEtight{q(\StateU_t)}{
		\nE{p(\StateF_n | \StateU_t)}{
			\log p(\MY_n \mid \StateF_n)
		}
	} \\
\end{aligned}
\end{equation*}
we apply the Bonnet's and Price's theorem \citep{lin_stein_lemma:2019} to bring the differential inside the expectation and make a Gauss-Newton \citep{golub_gauss_newton:1973} approximation ensuring that the Jacobian is \psd
\begin{equation}
\begin{aligned}
	\diff{\ELL}{[\meanp_{t}]_2} &\approx \sum^N_{n,p} \nE{q(\StateU_t)}{
		\MJ_{n, p}^\T \, \MH_{n,p} \, \MJ_{n,p}
	}, ~~ \text{where} ~~ \MJ_{n, p} = \diff{g_{n}(\mu_n)}{\StateU_t}, ~~ \MH_{n, p} = \mdiffII{\log p(\MY_n \mid g_{n})}{g_{n}},
\end{aligned}
\label{eqn:cvi_gauss_newton_update}
\end{equation}
and $g_{n,p} = g(\StateU_n)$ (\cref{eqn:phi_gp_general_pde}) and $\mu_n$ is the mean of $p(\StateF_n \mid \StateU_t)$ (\cref{eqn:spigp_kronecker_conditional_mean}). When using spatial mini-batching \cref{eqn:cvi_gauss_newton_update} is also subsampled. 
\section{Related Work}

From the optimality of natural gradients, in the conjugate setting, we exactly recover batch \gp based models such as \citep{wahlstrom_magnetic_gp:2013, jidling_linearly_constrained_gps:2017, berlinghieri_helmholtz_gp:2023}.  Our inference scheme also applies to models that do not require derivative information \,  i.e. in $\Ndt = \Nds = 1$. As a special case, we recover \citep{hamelijnck_spatio_temporal:2021}, but we have extended the inference scheme to support spatial mini-batching, allowing big spatial datasets to be used. The linear weighting matrix can be used to define a  linear model of coregionalisation and its variants  \citep{bonilla_multi_task:2008,alvarez_vector:2012,moreno_hetrogenous_gps:2018, vanhatalo_additive_multi_task_gps:2020} and through appropriately designed functionals also non-linear variants \citep{wilson_gprn:2012}.

In \citet{alvarez_lfm:2009} \gp priors over the solution of differential equations are obtained through a stochastic forcing term but they only consider situations where the Greens function is available. In \citep{hartikainen_linear_lfm:2011,hartikainen_non_linear_lfm:2012, schmidt_state_space_joint:2021,kramer_state_space_ode:2021}, efficient state-space algorithms are derived but are limited to the temporal setting only. Similarly, \citet{heinonen_learning_odes:2018}, learn a `free-form ODE'. In the spatio-temporal setting \citet{kramer_pde_eks:2022} and \citet{duffin_low_rank_eks_fem:2022} (which builds \citep{girolami_statfem:2021}) derive extended Kalman filter algorithms. Additionally there are approaches to constraining \gps by linear differential equations \citep{lange2018algorithmic, albert2019gaussian, besginow2022constraining}. More generally than \citep{berlinghieri_helmholtz_gp:2023} in \citep{harkonen2023gaussian} \gp  priors over the solutions to linear \PDEs with constant coefficients are derived.

Beyond \gp based models, physics informed neural networks (PINNs) incorporate physics by constructing a loss function between the network and the differential equation at a finite set of collocation points \citep{raissi_hidden_pde_gp:2017}. This amounts to a highly complex optimisation problem \citep{krishnapriyan_characterizing:2021} bringing difficulties for training  \citep{wang2022and, wang_pinn_respect_causality:2024} and uncertainty quantification (UQ) \citep{edwards_pinns:2022}. Current approaches to quantifying uncertainty in PINNs are based on dropout \citep{zhang_pinn_dropout:2019} and conformal predictions \citep{podina_conformalized:2024}. In recent years UQ and deep learning has received much attention however is limited by its computational cost \citep{papamarkou_bayesian_position_paper:2024}.

\section{Experiments} \label{sec:exp}

We now examine the performance of our \PIGP methods on multiple synthetic and real-world datasets. We compare against a batch \gp (no physical knowledge) and current state-of-art methods \AUTOIP  and  \HELMHOLTZ.  We provide more details on all experiments in \cref{sec:app_experimental_details}.

\paragraph{Non-linear Damped Pendulum}

In this first synthetic example, we consider learning the non-linear dynamics of a damped swinging pendulum. 
This is described by a second-order differential equation
\begin{equation}
	\diffII{\theta}{t} +  \nSin{\theta} + b \, \dif{\theta}{t} = 0
\end{equation}
where $b > 0$. The first term is a non-linear forcing term, and the third is the damping term. With $b=0.2$ we simulate a solution using Euler's method \citep{butcher_numerical_methods:2016} and generate 20 points in $t \in [0, 6]$ for training and 200 in $t \in [6, 30]$ for testing, with additive Gaussian noise of variance $0.01$.

\begin{table}
	\begin{minipage}[t]{.49\textwidth} 
		\scriptsize

		\caption{Test performance on the simulated damped pendulum. Time is the total wall clock time in seconds.}

		\setlength{\tabcolsep}{3pt}
		\renewcommand{\arraystretch}{.75}
		
		\label{table:pendulum}
		\vskip 0.05in
		{

		\begin{sc}
		\begin{tabular}{cccccc}	
			\toprule
				 Model & Whiten & $\Nc$ & Time & RMSE &  NLPD \\
			\midrule
		\multirow{4}{*}{\PIGP} & \multirow{4}{*}{$-$} &$10$ & $96.33$ & $0.22$ & $-0.09$ \\
		& & $100$ & $112.2$ & $0.05$ & $-0.38$ \\
		& & $500$ & $138.98$ & $0.05$ & $-0.72$ \\
		& & $1000$ & $144.29$ & $0.06$ & $-0.79$ \\
		\cmidrule{2-6}
		\multirow{8}{*}{\AUTOIP} & \multirow{4}{*}{$\times$} & $10$ & $153.52$ & $0.35$ & $0.41$ \\
		& & $100$ & $195.25$ & $0.2$ & $-0.08$ \\
		& & $500$ & $1011.88$ & $0.33$ & $0.32$ \\
		& & $1000$ &$ 5134.13$ & $0.36$ & $0.41$ \\
		\cmidrule{2-6}
		 & \multirow{4}{*}{\checkmark} &$10$ & $164.58$ & $0.16$ & $-0.30$ \\
		& & $100$ & $208.81$ & $0.05$ & $-0.41$ \\
		& & $500$ & $1088.31$ & $0.05$ & $-0.75$ \\
		& & $1000$ & $5656.62$ & $0.05$ & $-1.39$ \\
				\bottomrule
			\end{tabular}
			\end{sc}
			}

	\end{minipage}
	\hfill
	\begin{minipage}[t]{.49\textwidth}
		\scriptsize

		\caption{Test performance on the magnetic field strength experiment. Results are computed \wrt to the first output. Time is the average epoch time in seconds.}
		\label{table:curl_free}
		\vskip 0.25in
		{
		\setlength{\tabcolsep}{3pt}
		\begin{sc}
		\begin{tabular}{c|ccc|ccc}	
			\toprule
				  & \multicolumn{3}{c}{Time} & \multicolumn{3}{c}{R Squared}  \\
				    Spatial Size   & $5$ & $10$ & $20$ & $5$ & $10$ & $20$\\
			\midrule
				\HELMHOLTZ & $0.21$ & $0.46$ & $2.37$ & $0.23$ & $0.97$ & $0.97$ \\
				\PIGP & $0.43$ & $0.60$ & $1.16$ & $0.21$ & $0.97$ & $0.97$ \\
				\sPIGP & $0.44$ & $0.44$ & $0.31$ & $0.23$ & $0.96$ & $0.97$ \\
				\hPIGP & $0.29$ & $0.40$ & $0.35$ & $0.65$ & $0.86$ & $0.93$ \\
				\hsPIGP & $0.29$ & $0.28$ & $0.16$ & $0.65$ & $0.61$ & $0.74$ \\
			\bottomrule
			\end{tabular}
			\end{sc}
			}
	\end{minipage}
\end{table}

We are interested in \emph{i)} the effect of the number of collocation points, \emph{ii)} the effect of the optimisation algorithm. To answer these questions, we compare against \AUTOIP on $[10, 100, 500, 1000]$ collocation points, with and without whitening. Results are tabulated in \cref{table:pendulum}. As expected, the predictive RMSE of all models decreases as the number of collocation points increases. Due to the cubic complexity of \AUTOIP, the total time significantly increases as the number of collocation points increases. For example, when using $1000$ collocation points, \AUTOIP is $\approx 39$ times slower than \PIGP. Interestingly, the un-whitened case performs poorly, possibly due to the nonlinearity of the differential equation making optimisation difficult. This indicates that either whitening or natural gradients are required to handle the non-linearity arising due to the differential equation.

\paragraph{Curl-free Magnetic Field Strength} %

\begin{figure*}[!t]
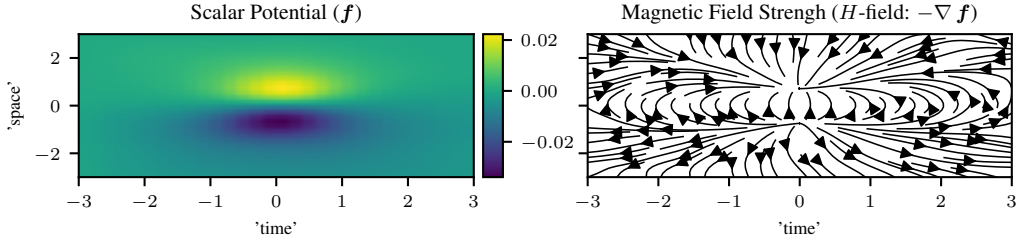

	\resizeinpgf{figs/curl_free}{curl_free_spatial.pgf}
	\caption{Curl free synthetic example. The left panel displays the learnt scalar potential functions by \PIGP with $\Ns=20$, and the right panel illustrates the associated vector field. }
\label{fig:curl_free_predictions}
\end{figure*}

In this experiment, we consider modelling the magnetic field strength of a dipole $H(r) = - \nabla \psi(r)$, where $\psi(r) = \nicefrac{\vm \cdot \vr}{|\vr|^3}$ is a scalar potential function \citep{chow_electromagnetic_theory:2006}.
Labelling the input dimensions as `time', `space' and `$z$', we let $\vm = [0, 1, 0]$ and generate observations from a spatio-temporal grid with $\Nt=50$, and $\Ns = [5, 10, 20]$, at $z=1$. $H(\vr)$ is a curl-free field and so we compare the curl free part of \HELMHOLTZ against \PIGP and its variants. \HELMHOLTZ and \PIGP are equivalent models (as this is the conjugate setting, \cref{thm:cvi_optimal}), and recover the same posterior and predictive distribution (up to numerical precision). However, due to the cubic-in-time complexity \HELMHOLTZ, at larger spatial sizes, is over $2$-times slower. The hierarchical approximation is substantially faster than \PIGP and performs similarly. As expected when introducing sparsity both \sPIGP and \hsPIGP are even faster; however, this is compensated by a slight drop in predictive performance. See \cref{fig:curl_free_predictions} and \cref{table:curl_free}. 

\begin{table}
	\setlength{\columnsep}{8pt}
	\setlength{\intextsep}{6pt}
	\centering 
	\scriptsize
	
	\caption{Test performance on the diffusion-reaction system. Time is the total wall clock time in seconds. \ekfPIGP significantly outperforms all models, and due to the \EKS prior only requires a $1$ epoch for inference. \hsPIGP  achieves the same performance as \AUTOIP but is over twice as fast.}
	\label{table:ac_res}
	{
	\def\arraystretch{1.0} %
	\setlength{\tabcolsep}{0.65em} %
	\begin{sc}
	\begin{tabular}{cccccc}	
		\toprule
			 Model &  RMSE & NLPD & CRPS & Time & Epochs\\
		\midrule
			\ekfPIGP & $\bf{0.09}$ & $\bf{-1.26}$ & $\bf{0.038}$ & $\bf{\sn{1.1}{3}}$ & $1$ \\ 
			\sPIGP & $0.19$ & $6.80$ & $0.093$ &$\sn{1.4}{4}$ & $20000$\\
			\hsPIGP & $0.17$ & $1.69$ & $0.077$ &$\sn{4.8}{3}$ & $20000$\\
			\AUTOIP & $0.17$ & $-0.29$ & $0.065$ & $\sn{1.1}{4}$ & $20000$\\
		\bottomrule
	\end{tabular}
	\end{sc}
	}
\end{table}

\paragraph{Diffusion-Reaction System} %

\begin{figure*}[!h]
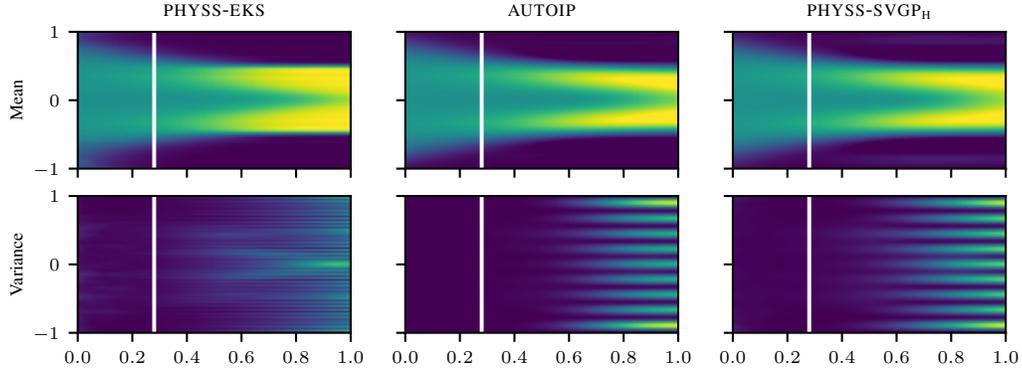

	\resizeinpgf{figs/ac}{ac.pgf}
	
	\caption{Results on the diffusion reaction system. The top row denotes the predictive mean, and the bottom the $95\%$ confidence intervals. The white line denotes where the training data ends. Only \ekfPIGP captures the sharp boundaries, due to the \IWP kernel. \hsPIGP recovers a similar solution to \AUTOIP but at half the computational cost.}

	\label{fig:exp_ac}
\end{figure*}

Consider a diffusion-reaction system given by an Allen-Cahn equation
\begin{equation}
	\diff{u
}{t} - 0.00001 \, \diffII{u}{x} + 5\, u^3 - 5\,u = 0
\end{equation}
where $x \in [-1, 1]$, $t\in [0, 1]$, $u(0, x) = x^2 \, \cos(\pi \, x)$, $u(t, -1) = u(t,1)$ and $\diff{u}{x}(t, -1)=\diff{u}{x}(t, 1)$. Following \citep{long_autoip:2022}, we use the solution provided by \citep{raissi_pinn:2019}	and sample $256$ training examples from $t \in [0, 0.28]$.
We compare \ekfPIGP (where $g$ is  linerized in the \EKS prior), \sPIGP and \hsPIGP against \AUTOIP. Following \citep{long_autoip:2022}, we use a learning rate of $0.001$ for Adam. For \AUTOIP, we place 100 collocation points across the whole input domain on a regular grid. For both \sPIGP, and \hsPIGP we require more collocation points in the temporal dimension and place them on a regular grid of size $20\times10$. For \ekfPIGP we use an integrated Wiener kernel (\IWP) on time \citep{sarkka2019applied} and place $100\times40$ collocation points. We are unable to place more collocation for \AUTOIP due to computational limits. Results are presented in \cref{fig:exp_ac} and \cref{table:ac_res}. \ekfPIGP requires only a single epoch and can better handle the sharp boundaries. Our method  \hsPIGP is over twice as fast as \AUTOIP whilst achieving similar predictive RMSE.

\paragraph{Ocean Currents} %

\begin{figure}[!h]
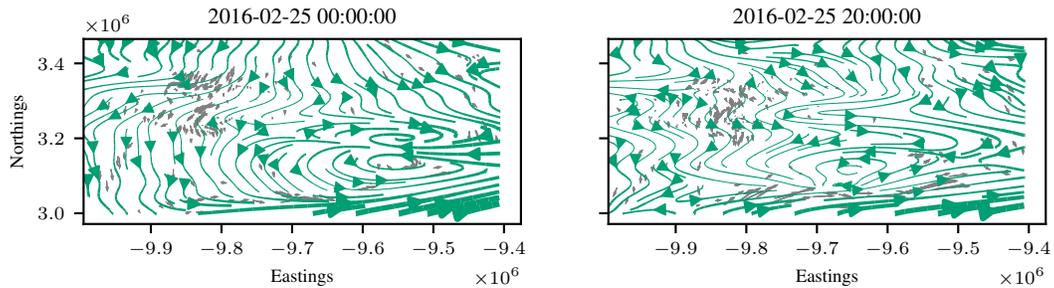

	\inputpgf{figs/3d_spatial_laster}{3d_spatial_laser.pgf}
	\caption{Predicted ocean currents by \hsPIGP. True observations are in grey, and predictions in green. The thickness of the line represents uncertainty and is computed by the L2 norm of the standard deviations across both outputs. %
	}	
	\label{eqn:laser_stream_plot}
\end{figure}

We now model oceanic currents in the Gulf of Mexico. We follow \citep{berlinghieri_helmholtz_gp:2023} and use the dataset provided by \citet{dasaro_ocean:2018} that has information from over $1,000$ buoys. We focus on the region in long. $[-90, -84.5]$, lat. $[26, 30]$ on 2016-02-25, by computing hourly averages. This results in $N=42,243$ observations, and we construct a test-train split on $0.1$ per cent of the data. 
It is infeasible to run  \HELMHOLTZ due to data size (in \citet{berlinghieri_helmholtz_gp:2023}, observations from only 19 buoys are used with $N=55$). However, we run \hsPIGP with $50$ spatial inducing points and a spatial mini-batch size of $10$, and plot results in \cref{eqn:laser_stream_plot}. Our predictions are in excellent agreement with the test data, achieving an \RMSE of $0.14$, \NLPD of $-0.52$, \CRPS of $0.078$, and an average run-time of $1.86(s)$ per epoch.

\section{Conclusion}

We introduced a physics-informed state-space \gp that integrates observational data with physical knowledge. Within the variational inference framework, we derived a computationally efficient algorithm that uses Kalman smoothing to achieve linear-in-time costs. To gain further computational speed-ups, we proposed three approximations with inducing points, spatial mini batching and structured variational posteriors. When used in conjunction, they allow us to handle large-scale spatiotemporal problems. The bottleneck is always the state size, where nearest neighbours \gps \citep{datta_nearest_neighbor_gps:2016,wu_nearest_neighbour:2022} could be explored. For highly non-linear problems, future directions could explore deep approaches \citep{salimbeni_dgp:2017} or more flexible kernel families \citep{wang_non_separable:2020}. One limitation is the use of the collocation method which is only enforcing the differential equation point wise, whilst  future work could look at the more general methods of weighted residuals \citep{pfortner_physics_informed:2023}.

\begin{ack}

OH acknowledges funding from The Alan Turing Institute PhD fellowship programme and the UKRI Turing AI Fellowship (EP/V02678X/1).
AS acknowledges support from the Research Council of Finland (339730).
TD acknowledges support from UKRI Turing AI Acceleration Fellowship (EP/V02678X/1) and a Turing Impact Award from the Alan Turing Institute.
The authors acknowledges the University of Warwick Research Technology Platform (aquifer) for assistance in the research described in this paper.
For the purpose of open access, the authors have applied a Creative Commons Attribution (CC-BY) license to any Author Accepted Manuscript version arising from this submission.

\end{ack}

\bibliographystyle{abbrvnat}
\appendix
\clearpage
\section*{Appendices}

\section{Variational Approximation Derivation} \label{sec:app_vi_approx_deriv}
\subsection{Overview of Notation}

\begin{table}[h]
	\caption{Table of Notation}
	\label{table:app_table_of_notation}
\footnotesize
\begin{center}
\begin{tabularx}{\textwidth}{cc p{.6\textwidth}}
		\toprule
			Symbol & Size & Description\\
		\midrule
			$\NumData$ & -- & Number of observations. \\
			$Q$ & -- & Number of latent functions. \\
			$P$ & -- & Number of latent outputs. \\
			$\D$ & -- & Number of input features. \\
			$\Ns$ & -- & Number of spatial points. \\
			$\Nt$ & -- & Number of temporal points. \\
			$\Nds$ & -- & Number of spatial derivatives. \\
			$\Ndt$ & -- & Number of temporal derivatives. \\
			$\Nd = \Nds \cdot \Ndt$ & -- & Total number of spatio-temporal derivatives. \\
			$d$ & -- & State dimension. \\
			$B_{\SpaceIdx}$ & -- & Spatial batch size. \\
			$\Ms$ & -- & Number of spatial inducing points. \\
			$\MX$ & $\NumData \times \D$ & Input data matrix. \\ 
			$\Xs$ & $\Ns \times \D$ & Spatial Locations of training data. \\ 
			$\Xt$ & $\Nt$ & Temporal locations of training data. \\ 
			$\vx, \MX_n, \, \MX_{t, s}$ & $\D$ & Single training input. \\ 
			$\vx_t$ & -- & Temporal axis of a single training input location $\vx$. \\ 
			$\vx_s$ & $\D-1$ &  Spatial axes of a single training input location $\vx$.. \\ 
			$\MY$ & $\NumData \times \P$ & Output data matrix. \\  
			$\MY_n, \, \MY_{t, s}$ & $\P$ & Single training output. \\ 
			$\StateF$ & $\Ns \times d$ & Filtering state. \\
			$\MW$ & $(\P \times \Nd) \times (\Q \times \Nd)$ & Mixing matrix between $\Q$ latent \gps \\
			$\StateFProc_q(\MX_n)$ & $\Nd$ & Random vector of the $\Nd$ derivatives at location $\MX_n$ \\
			$\MF_n$ & $(\P \times \Nd)$ & Output of linearly mixed \gps. \\
			$g: \R^{\P \cdot \Nd} \to \R$ & -- & Differential equation defined using $\Nd$ spatio-temporal derivatives and $P$ outputs/states. \\
			$\MZ_\SpaceIdx$ & $\R^{\Ms \times (\D-1)}$ & Spatial Inducing Points. \\
			$\KS(\Xs, \Xs)$ & $\Ns \times \Ns$ & Spatial Kernel. \\ 
			$\KT(\Xt, \Xt)$ & $\Nt \times \Nt$ & Temporal Kernel. \\ 
			$\MK(\X, \X)$ & $\N \times \N$ & Spatio-Temporal Kernel. \\ 
			$\bar{\MK} = \TDiff{\Diff{\MK(\MX, \MX)}}$ & $(\N \cdot \Nd) \times (\N \cdot \Nd)$ & Spatio-temporal kernel over all $\N$ locations and $\Nd$ derivatives. \\
			$\MK_t^{\DiffOp}(\MX_\Time, \MX_\Time)$ & $(\Nt \times \Ndt) \times (\Nt \times \Ndt) $ & Gram matrix over temporal derivatives. \\
			$\MK_s^{\DiffOp}(\MX_\Time, \MX_\Time)$ & $(\Ns \times \Nds) \times (\Ns \times \Nds) $ & Gram matrix over spatial derivatives. \\
		\bottomrule
	\end{tabularx}
	\end{center}
\end{table}

\subsection{Layout of Vectors and Matrices}

We use a numerator layout for derivatives. Let $Q$ denote the number of independent latent functions and $\Nd$ the number of derivatives computed, and let $f_{q, d}$ denote the latent \gp for $d$'th derivative of the $q$'th latent function.
 We will need to keep track of the permutation of our data \wrt to space, time, and latent functions. Inspired by `row-major' and `column-major' layouts, we will use the following terminology that describes the ordering of the data across latent functions and time and space:
\begin{itemize}
	\item \textbf{latent-data:} $\MF = \MF_{\ld} =  [\MF_1(\MX), \cdots, \MF_Q(\MX)]$ with $\MF_q(\MX) = [\vf_{q, 1}(\MX), \cdots, \vf_{q, \Nd}(\MX)]$ which is ordered by stacking each of the latent functions on top of each other.
	\item \textbf{data-latent:} $\MF_{\dl} = [\MF_1(\MX_n), \cdots, \MF_Q(\MX_n)]^N_n$ which is ordered by stacking the latent functions evaluated at each data point across all data points.
	\item \textbf{time-space:} $[\vf(\gridX_\TimeIdx)]^{\Nt}_\TimeIdx$ which is ordered by stacking each of the input points at each time point on top of each other. This is only applicable when there is a single latent function.
	\item \textbf{latent-time-space:} $[\MF_1(\gridX_1), \cdots, \MF_1(\gridX_{\Nt}), \cdots, \MF_Q(\gridX_1), \cdots, \MF_Q(\gridX_{\Nt})]$
	\item \textbf{time-latent-space:} $[\MF_1(\gridX_\TimeIdx), \cdots, \MF_Q(\gridX_\TimeIdx)]^{\Nt}_\TimeIdx$
\end{itemize}
The default order will be latent-data (and latent-time-space for spatio-temporal problems). Since all of these are just simple permutations of each other, there exists a permutation matrix that permutes between any two of the layouts above. We use the function $\pi$ to denote a function that performs this permutation such that:
\begin{equation}
\begin{aligned}
	\MF_{\dl} &= \pi_{\ld \rightarrow \dl}(\MF_{\ld}) \\
	\MF_{\ld} &= \pi_{\dl \rightarrow \ld}(\MF_{\dl})
\end{aligned}
\end{equation}

\subsection{Timeseries Setting - Single Latent Function} \label{sec:app_pigp_timeseries_setting}

Let $\MX \in \R^{N \times D}, \MY \in \R^{N \times P}$ be input-output observations across $P$ outputs, where $\NumData = \Nt$. For now, we only consider the case where $Q=1$. We assume that $f$ has a state-space representation, and we denote its state with its $\Nd$ time derivatives as $\MF(\MX) = [ \vf(\MX), \diff{\vf(\MX)}{\MX}, \pdiffII{\vf(\MX)}{\MX}, \cdots ]$ in latent-data format. The vector $\MF(\MX)$ is of dimension $(\NumData \times \Nd)$. We also use the notation $\MF_n = \MF(\MX_n)$, which is a $\Nd$-dimensional vector of the derivatives at location $\MX_n$. The joint model is
\begin{equation}
	\bigg[\, \prod^N_n p(\MY_n \mid \MF_n, \DE) \, \bigg] \, p(\MF).
\end{equation}
At this point, we place no particular restriction on the form of the likelihood, aside from decomposing across $\NumData$. The prior $p(\MF)$ is a multivariate \gp of dimension $\NumData \times \Nd$
\begin{equation}
	p(\MF) = \nN{\MF}{\Zero}{\TDiff{\Diff{\MK(\vx, \vx)}}}	
\end{equation}
Let $q(\MF)$ be a free-form multivariate Gaussian of the same dimension as $p(\MF)$ then the corresponding \ELBO is:
\begin{equation}
\begin{aligned}
	\LL &= 
		\nE{q(\MF)}{
			\log \frac{
				p(\MY \mid \MF, \DE) \, p(\MF)
			}{
				q(\MF)
			}
		}, \\
		&= 	\nE{q(\MF)}{\log p(\MY \mid \MF, \DE)} - \nKL{q(\MF)}{p(\MF)}, \\
		&= \underbrace{\sum^N_n \nE{q(\MF_n)}{
		\log p(\MY_n \mid \MF_n, \DE)}}_{\ELL} - \underbrace{\vphantom{\sum^N_n}\nKL{q(\MF)}{p(\MF)}}_{\KLD},
\end{aligned}
\end{equation}
and the marginal $q(\MF_n)$ is a $\Nd$-dimensional Gaussian corresponding to the $n$'th observation. The natural gradients are
\begin{alignat}{3}
	 \apxnatp & \leftarrow (1-\beta) \, \apxnatp + \beta \, \diff{\ELL}{\meanp}  &&\Bigl\} ~~ \text{Surrogate likelihood update} \label{eqn:app_timeseries_cvi_natural_gradient}\\
	\natp &  \leftarrow \apxnatp + \priornatp &&\Bigl\} ~~ \text{Surrogate model update} \label{eqn:app_timeseries_cvi_conjugate_model}
\end{alignat}
where $\apxnatp = \left[ [\apxnatp]_1, [\apxnatp]_2\right]^\T$ and $[\apxnatp]_1$ is an $(\NumData \times \Nd)$ vector and $[\apxnatp]_2$ an $(\NumData \times \Nd) \times (\NumData \times \Nd)$ matrix. \cref{eqn:app_timeseries_cvi_conjugate_model} is a sum of natural parameters, and so is the conjugate Bayesian update. Naively computing this would yield no computation speed up as the computation cost would be cubic $\BO(N^3)$. However, the natural parameters of the likelihood ($\apxnatp$) are guaranteed to be block diagonal, one block per data point (if $\apxnatp_0$ is initialised as so). This immediately implies that \cref{eqn:app_timeseries_cvi_conjugate_model} can be computed using efficient Kalman filter and smoothing algorithms. The structure of $\apxnatp$ depends on the gradient of the expected log-likelihood $\diff{\ELL}{\meanp}$. Expanding this out
\begin{equation}
\begin{aligned}
	\diff{\ELL}{[\meanp]_2} = \sum^N_n \diff{
		\nE{q(\MF_n)}{\log p(\MY_n \mid \MF_n, \DE)}
	}{[\meanp]_2} = \sum^N_n \apxmeanp_n
\end{aligned}
\end{equation}
where each component $\apxmeanp_n$ is a $(\NumData \times \Nd) \times (\NumData \times \Nd)$ matrix that only has $\Nd \times \Nd$ non-zero entries; as these are the only elements that directly affect $\MF_n$. Collecting all these submatrices into a block diagonal matrix, we have a matrix in data-latent format, however, $\diff{\ELL}{\meanp}$ is in latent-data, and so all we need to do is permute by $\MP$:
\begin{equation}
	\diff{\ELL}{\meanp} =   \pi_{\dl \rightarrow \ld}\left( \, \blkdiag[\, \apxmeanp_1, \cdots, \apxmeanp_N\, ] \, \right).
\end{equation}
Converting from natural to moment paramerisation the surrogate update is:
\begin{equation}
\begin{aligned}
	q(\MF) 
		&\propto \nN{\apxy}{\MF}{\apxv} \, p(\MF) \\
		&= \bigg[\prod^N_n \, \nN{\apxy_n}{\MF_n}{\apxv_n} \, \bigg] \, p(\MF)
\end{aligned}
\end{equation}
where $\apxy_n$ is a $\Nd$-dimensional vector, and $\apxv_n$ is a $\Nd \times \Nd$ matrix, and efficient Kalman filtering and smoothing algorithms can be used to compute the surrogate model update. Substituting $q(\MF)$ back into the \ELBO it further simplifies:
\begin{equation}
\begin{aligned}
	\LL &= 
		\nE{q(\MF)}{
			\log \frac{
				p(\MY \mid \MF, \DE) \, \cancel{p(\MF)} \, p(\apxy \mid \apxv)
			}{
				\nN{\apxy}{\MF}{\apxv} \, \cancel{p(\MF)}
			}
		} \\
		&= \sum^{\NumData}_{n} \nE{q(\MF_n)}{ \log p(\MY_n \mid \MF_n, \DE)} -  \sum^{\NumData}_{n} \nE{q(\MF_n)}{ \log \nN{\apxy_n}{\MF_n}{\apxv_n}} + \log p(\apxy \mid \apxv)
\end{aligned}
\end{equation}
each term can be computed efficiently as the by-product of the Kalman filtering and smoothing algorithm used to compute $q(\MF)$.

\subsection{Timeseries Setting - Multiple Latent Functions} \label{sec:app_pigp_timeseries_setting_multiple_latent}

We now generalise the previous section to handle multiple independent latent functions, \ie $\NumLatents > 0$. The model prior now has the form
\begin{equation}
	p(\MF) = \prod^\NumLatents_q p(\MF_q)
\end{equation}
where $p(\MF_q)$ is a prior over $\vf_{q, 1}$ and its $\Nd$ partial deriatives. We consider two approaches: a mean-field approximate posterior and a full Gaussian. 

The first approach defined mean-field approximate posterior $q(\MF) \defeq \prod^\NumLatents_q q(\MF_q)$ where each $q(\MF_q)$ is a free-form Gaussian of dimension $(\NumData \times \Nd)$. The natural gradient updates are now simply applied to each component $q(\MF_q)$ separately, and we essentially follow the update set out in \cref{sec:app_pigp_timeseries_setting}. 

The second approach is a full-Gaussian approximate posterior where $q(\MF)$ is a $(\NumLatents \times \Nd \times \NumData)$-dimensional free-form Gaussian. In this case the \ELL is
\begin{equation}
	\ELL = \sum^N_n \nE{q(\MF_n)}{\log p(\MY_n \mid \MF_n, \DE)}
\end{equation}
where $q(\MF_n)$ is of dimension $(\NumLatents \times \Nd)$. This implies that the gradient of the \ELL $\diff{\ELL}{[\meanp]_2} = \sum^N_n \apxmeanp_n$ where $\apxmeanp_n$ now has $(\NumLatents \times \Nd) \times (\NumLatents \times \Nd)$ non-zero entries. Switching to moment parameterisation
\begin{equation}
	q(\MF) \propto \bigg[\prod^N_n \, \nN{\apxy_n}{\MF_n}{\apxv_n} \, \bigg] \, p(\MF)
\end{equation}
where $\apxy_n$ is of dimension $(\NumLatents \times \Nd)$ and $\apxv_n$ is $(\NumLatents \times \Nd) \times (\NumLatents \times \Nd)$. We can still use state-space algorithms by simply stacking the states corresponding to each $\MF_q$ \citep{sarkka2019applied}.

\subsection{Spatio-temporal Data - Single Latent Function}

We now turn to the spatio-temporal setting where $\MX, \MY$ are spatio-temporal observations on a spatio-temporal grid ordered in time-space format. We now derive the conjugate variational algorithm for \sPIGP and \hsPIGP. The algorithms for \PIGP and \hPIGP are recovered as special cases when $\MZ=\Xs$.

\subsubsection{Spatial Derivative Inducing Points} \label{sec:app_st_cvi_sparsity_single_latent}

We follow the standard sparse variational GP procedure and augment that prior with inducing points $\MU = \Diff{\vu}$ at locations $\MZ$. We require that the inducing points are defined on a spatial-temporal grid at the same temporal points as the data $\MX$, such that $\MZ = [\MZ_t]^{\Nt}_t$. This is required to ensure Kronecker structure between the inducing points and the data. The joint model is 
\begin{equation}
	p(\MY \mid \MF) \, p(\MF \mid \MU) \, p(\MU)
\end{equation}
where 
\begin{equation}
\begin{aligned}
	p(\MU) &= \nN{\MU}{\Zero}{
		\MK^{\DiffOp}_\Time(\MX_\TimeIdx, \MX_\TimeIdx) \kron \MK^{\DiffOp}_\Space(\MZ_\SpaceIdx, \MZ_\SpaceIdx)
	}, \\
	p(\MF \mid \MU) &= \nN{\MF}{\mu_{\MF \mid \MU}}{\Sigma_{\MF \mid \MU}} 
\end{aligned}
\end{equation}
and the conditional mean and covariance are given by
\begin{equation}
\begin{aligned}
	\mu_{\MF \mid \MU} &= \big[ \, 
		\DiffKt{\MX}{\MX} \kron \DiffKs{\MX}{\MZ} 
	\, \big] \, \big[\, 
		\DiffKt{\MX}{\MX} \kron \DiffKs{\MZ}{\MZ}  
	\, \big]^{-1} \, \MU \\
	&= \big[ \, \MI \kron \DiffKs{\MX}{\MZ}  \, (\DiffKs{\MZ}{\MZ})^{-1} \, \big] \, \MU, \\
\end{aligned}
\label{eqn:app_sparse_kronecker_conditional_mean}
\end{equation}
and
\begin{equation}
\begin{aligned}
	\Sigma_{\MF \mid \MU} = 
		&\big[ \, \DiffKt{\MX}{\MX} \kron \DiffKs{\MX}{\MX} \, \big] \\
		&- 
		\big[ \, \MK^{\kron}_{\MX,\MZ} \, \big] \, 
		\big[\, \DiffKt{\MX}{\MX} \kron \DiffKs{\MZ}{\MZ}    \, \big]^{-1} \, 
		\big[ \, \MK^{\kron}_{\MX,\MZ} \, \big]^\T 
\end{aligned}
\end{equation}
where $\MK^{\kron}_{\MX,\MZ} = \big[ \, \DiffKt{\MX}{\MX} \kron \DiffKs{\MX}{\MZ}   \, \big]$ which simplifies to
\begin{equation}
\begin{aligned}
	\Sigma_{\MF \mid \MU} = \DiffKt{\MX}{\MX} \kron \big[ \,
		\DiffKs{\MX}{\MX} - \DiffKs{\MX}{\MZ} \, \DiffKs{\MZ}{\MZ}^{-1} \, \DiffKs{\MZ}{\MX}
	\, \big]
\end{aligned}
\label{eqn:app_sparse_kronecker_conditional_covariance}
\end{equation}
Due to the Kronecker structure, the marginal at time $t$ only depends on the inducing points in that time slice so we can still get a CVI-style update that can be computed using a state-space model. To see why we again look at the Jacobian of the \ELL: $\diff{\ELL}{[\meanp]_2} = \sum^N_n \apxmeanp_n$ where $\apxmeanp_n$ now has $(\Nd \times M) \times (\Nd \times M)$ non-zero entries, which corresponding to needed all $M$ spatial inducing points with there derivatives to predict at a single time point. This is similar to the time series setting, except we have now predicted in space to compute marginals of $q(\MF)$. To be complete, we write that the marginal $q(\MU)$ is
\begin{equation}
	q(\MU) \propto \bigg[\prod^{\Nt}_t \, \nN{\apxy_t}{\MF_t}{\apxv_t} \, \bigg] \, p(\MF)
\end{equation}
where $\apxy_t$ and $\MF_t$ are vectors of dimension $(\Nd \times M)$, and $\apxv_t$ is a matrix of dimension $(\Nd \times M) \times (\Nd \times M)$. The marginals $q(\MU_t)$, and the corresponding marginal likelihood $p(\apxy \mid \apxv)$ can be computed by running a Kalman filter and smoother in $\BO(\Nt \cdot (\Ms \cdot \Nds \cdot d)^3)$. The marginal $q(\MF) = \nN{\MF}{\mu_{\MF}}{\Sigma_{\MF}}$ where 
\begin{equation}
	\mu_{\MF} = \big[ \, \MI \kron \DiffKs{\MX}{\MZ}  \, (\DiffKs{\MZ}{\MZ})^{-1} \, \big] \, \vm
\end{equation}
and
\begin{equation}
	\Sigma_{\MF} = \Sigma_{\MF \mid \MU} + \left[ \MI \kron  \DiffKs{\MX}{\MZ}  \, (\DiffKs{\MZ}{\MZ})^{-1} \right] \, \MS \, \left[ \MI \kron  \DiffKs{\MX}{\MZ}  \, (\DiffKs{\MZ}{\MZ})^{-1} \right]^\T
\end{equation}

\subsubsection{Structured Approximate Posterior With Spatial Inducing Points}
\label{sec:app_st_cvi_structured_q_single_latent}

We now derive the algorithm for the case of the structured approximate posterior with spatial inducing points. The key is to define the free-form approximate posterior over the inducing points and their temporal derivatives and then use the model conditional to compute the spatial derivatives. The model is
\begin{equation}
	p(\MY \mid \MF) \, p(\MF \mid \DiffT{\vu}) \, p(\DiffT{\vu}).
\end{equation}
Each term is
\begin{equation}
\begin{aligned}
	p(\DiffT{\vu}) &= \nN{\DiffT{\vu}}{\Zero}{\TDiffT{\DiffT{\MK(\MZ, \MZ)}}}, \\
	p(\MF \mid \DiffT{\vu}) &= \nN{\MF}{\mu_{\MF \mid \MU_t}}{\Sigma_{\MF \mid \MU_t}} 
\end{aligned}
\end{equation}
where 
\begin{equation}
\begin{aligned}
	\mu_{\MF \mid \MU_t} &= \big[ \, 
		\DiffKt{\MX}{\MX} \kron  \, \subDiffKs{\MX}{\MX}	\, \big] \, \big[\, 
		\DiffKt{\MX}{\MX} \kron \MK_\Space(\MZ_s, \MZ_s) 
	\, \big]^{-1} \, \DiffT{\vu} \\
&= \big[ \, 
		\MI \kron  \, \subDiffKs{\MX}{\MX} \, \MK_\Space(\MZ_s, \MZ_s)^{-1}	\, 
	\big]  \, \DiffT{\vu}
\end{aligned}
\label{eqn:app_structured_cvi_kronecker_conditional_mean}
\end{equation}
and
\begin{equation}
\begin{aligned}
	\Sigma_{\MF \mid \MU_t} = 
		&\big[ \, \DiffKt{\MX}{\MX} \kron \DiffKs{\MX}{\MX} \, \big] \\
		&- 
		\big[ \, \widetilde{\MK}^{\kron}_{\MX, \MZ_s} \, \big] \, 
		\big[\, \DiffKt{\MX}{\MX} \kron \MK_\Space(\MZ_s, \MZ_s)  \, \big]^{-1} \, 
		\big[ \, \widetilde{\MK}^{\kron}_{\MX, \MZ_s}  \, \big]^\T 
\end{aligned}
\label{eqn:app_structured_cvi_kronecker_conditional_variance}
\end{equation}
where $\widetilde{\MK}^{\kron}_{\MX, \MZ_s} =\DiffKt{\MX}{\MX} \kron \, \subDiffKs{\MX}{\MZ_s}$ and
\begin{equation}
	\subDiffKs{\MX}{\MX} = \left[\begin{matrix}
 			\MK_\Space(\MX_s, \MZ_s) \\
 			\DiffS{\MK_\Space(\MX_s, \MZ_s)}
	\end{matrix} \right].
\end{equation}
The approximate posterior is defined as
\begin{equation}
	q(\MF, \DiffT{\vu}) = p(\MF \mid \DiffT{\vu}) \, q(\DiffT{\vu})
\end{equation}
where $q(\DiffT{\vu}$ is a free-form Gaussian of dimension $
(Nd \times \Nt \times \Ms)$. The rest of the derivation simply follows \cref{sec:app_st_cvi_sparsity_single_latent} by simpling substituting $\subDiffKs{\MX}{\MZ_s}$ into the corresponding conditionals. The final result is that the approximate posterior decomposes as
\begin{equation}
	q(\MU) \propto \bigg[\prod^{\Nt}_t \, \nN{\apxy_t}{\MF_t}{\apxv_t} \, \bigg] \, p(\MF)
\end{equation}
where $\apxy_t$ and $\MF_t$ are vectors of dimension $(\Ndt \times \Ms)$, and $\apxv_t$ is a matrix of dimension $(\Ndt \times \Ms) \times (\Ndt \times \Ms)$. The marginals $q(\MU_t)$, and the corresponding marginal likelihood $p(\apxy \mid \apxv)$ can be computed by running a Kalman filter and smoother in $\BO(\Nt \cdot (\Ms \cdot d)^3)$, which compared to \cref{sec:app_st_cvi_sparsity_single_latent} is \emph{not} cubic in the number of spatial derivatives.

\subsubsection{Gauss-Newton Natural Gradient Approximation} \label{sec:app_gauss_newton}

We now provide the full derivation of the Gauss-Newton approximation of the natural gradient used to ensure \psd updates. We will make use of the following identities, known as the Bonnet and Price theorems (see,  \citep{lin_stein_lemma:2019}),
\begin{align}
	\diff{}{\mu} \, \nE{q(\vf \mid \mu, \Sigma)}{\ell(\vf)} &= \nE{q(\vf \mid \mu, \Sigma)}{ \diff{}{\vf} \, \ell(\vf)} \\
	\diff{}{\Sigma} \, \nE{q(\vf \mid \mu, \Sigma)}{\ell(\vf)} &= \frac{1}{2} \, \nE{q(\vf \mid \mu, \Sigma)}{ \hess{}{\vf} \, \ell(\vf)} 
\end{align}
which describes how to bring derivatives inside expectations. To ease notations, we work with a more general description of the model presented in the main paper, where we have multiple independent latent functions and use $T_p$ to denote likelihood-specific functions which, for example, can be used to represent $\DE$ or as the identity of standard Gaussian likelihoods. The model is
\begin{equation}
\begin{aligned}
	p(\vu_q) &= \nN{\vu_q}{0}{\MK_q} \\
	p(\vf_q \mid \vu_q) &= \nN{\vf_q}{\mu_{\vf_q \mid \vu_q}}{\Sigma_{\vf_q \mid \vu_q}} \\
	\MY_{n,q} &= p(\MY_{n,q} \mid T_p(\vf_{n,1}, \hdots, \vf_{n,Q}))
\end{aligned}
\end{equation}
where the shapes are $\vu_q \in \R^{M}$, $\vf_q \in \R^{N}$, $T_p:\R^Q \rightarrow \R^P$, $\MY \in \R^{N \times P}$, and $\MY_{n, p} \in \R$. The variational approximation is 
\begin{equation}
	q(\MU) = \nN{\MU}{\vm}{\MS} 
\end{equation}
where $\MU = [\vu_1, \cdots, \vu_Q]$, $\vm \in \R^{QM \times 1}$ and $\MS \in \R^{QM \times QM}$. Let $\MF = [\vf_1, \hdots, \vf_Q]$. The expected log-likelihood of the variational approximation is
\begin{equation}
\begin{aligned}
	\ELL &= \nE{q(\MU)}{
		\nE{p(\MF \mid \MU)}{
			\sum_{n, p} \log p(\MY_{n, p} \mid T_p(\MF_{n, p}))
		}
	} \\
	&= \sum_{n, p} \nE{q(\MU)}{
		\nE{p(\MF_n \mid \MU)}{
			\log p(\MY_{n, p} \mid T_p(\MF_{n, p}))
		}
	} \\
	&= \sum_{n, p} \nE{q(\MU_k)}{
		\nE{p(\MF_n \mid \MU_k)}{
			\log p(\MY_{n, p} \mid T_p(\MF_{n, p}))
		}
	} 
\end{aligned}
\end{equation}
where $k = t(n)$ is the time period associated with data $\MX_n$. Here $\MU_k$ are the spatial inducing points at time $t(n)$ and hence $\MU_k \in \R^{Q M_s}$. We need to compute 
\begin{equation}
\begin{aligned}
	\diff{\ELL}{\MS} &= \sum_{n, p} \diff{}{\MS} \, \nE{q(\MU_k)}{
		\nE{p(\MF_n \mid \MU_k)}{
			\log p(\MY_{n, p} \mid T_p(\MF_{n, p}))
		}
	} \\
	&= \sum_{n, p} \MP_k \cdot \diff{}{\MS_k} \, \nE{q(\MU_k)}{
		\nE{p(\MF_n \mid \MU_k)}{
			\log p(\MY_{n, p} \mid T_p(\MF_{n, p}))
		}
	} \, \cdot \MP^\T_k\\
	&= \sum_{n, p} \MP_k \cdot \nE{q(\MU_k)}{
		\hess{}{\MU_k} \, \nE{p(\MF_n \mid \MU_k)}{
			\log p(\MY_{n, p} \mid T_p(\MF_{n, p}))
		}
	} \, \cdot \MP^\T_k\\
	&\stackrel{\text{delta}}{\approx} \sum_{n, p} \MP_k \cdot \nE{q(\MU_k)}{
		\hess{}{\MU_k} \, \log p(\MY_{n, p} \mid T_p(\MF^{*}_{n, p}))
	} \, \cdot \MP^\T_k\\
	&\stackrel{\text{Gauss-Newton}}{\approx} \sum_{n, p} \MP_k \cdot \nE{q(\MU_k)}{
		\left[\,\diff{T_p(\MF^{*}_{n, p})}{\MU_k}\,\right]^\T \, 
		\hess{\log p(\MY_{n, p} \mid T_p) }{T_p} \, \left[\,\diff{T_p(\MF^{*}_{n, p})}{\MU_k}\,\right]
	} \, \cdot \MP^\T_k \\ 
	&= \sum_{n, p} \MP_k \cdot \nE{q(\MU_k)}{ \MJ_k^\T \, \MH_k \MJ} \, \cdot \MP^\T_k
\end{aligned}
\end{equation}
where the shapes are
\begin{align}
	\MJ &\in Q M_s \times 1	\\
	\MH_k &\in 1 \times 1
\end{align}
and $\MP_k$ is a permutation matrix that permutes from data-latent to latent-data format. In implementation, we do not need to perform this permutation as we only require the blocks $\diff{\ELL}{\MS_k}$ but write it here for completeness.

\subsubsection{Optimality of Natural Gradients In Linear Models} \label{sec:app_cvi_nat_grad_optimal}

\begin{theorem}
	Consider a linear multi-task model of the form
	\begin{equation}
	\begin{aligned}
		f_q(\cdot) &\sim \GP(0, \MK_q) \\
		\StackedF(\vx) &= [\vf_q(\vx)]^\NumLatents_{q=1} \\
		\MY_n &= \MixingMatrix  \, \StackedF(\MX_n) + \psi ~~ \text{where} ~~ \psi \sim \nGauss{\Zero}{\nBlkdiag{[\sigma^2_p]^{\NumTasks}_{p=1}}}
	\end{aligned}
	\end{equation}
	then under a full Gaussian variational approximate posterior
	\begin{equation}
		q(\StackedF) \defeq \nN{\StackedF}{\vm}{\MS}
	\end{equation}
	where $\vm \in \R^{(N \times Q) \times 1}, \MS \in \R^{(N \times Q) \times (N \times Q)}$ then the natural gradient update with a learning rate of $1$ recovers the optimal solution $p(\StackedF \mid \MY)$.
\end{theorem}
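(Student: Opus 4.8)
The plan is to exploit conjugacy. Because each $\vf_q$ carries a Gaussian prior and the observation model $\MY_n = \MixingMatrix\,\StackedF(\MX_n) + \psi$ is linear-Gaussian, the exact posterior $p(\StackedF\mid\MY)$ is itself Gaussian and therefore lies inside the variational family $q(\StackedF)=\nN{\StackedF}{\vm}{\MS}$; it is the unique maximiser of the \ELBO. I would show that the single $\beta=1$ step lands exactly on it. Setting $\beta=1$ in the update \cref{eqn:nat_param_cvi_update} gives $\natp \leftarrow \diff{\ELL}{\meanp} + \priornatp$, so the proof reduces to two claims: (i) $\diff{\ELL}{\meanp}$ is independent of the current $q$, and (ii) $\diff{\ELL}{\meanp}+\priornatp$ are precisely the natural parameters of $p(\StackedF\mid\MY)$.

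For (i) I would expand the expected log-likelihood. Writing each Gaussian log-density and taking the expectation under $q$, the quadratic part enters only through the block $\E_q[\StackedF_n\StackedF_n^\T]=[\meanp]_2$ and the linear part only through $\E_q[\StackedF_n]=[\meanp]_1$, so $\ELL$ is an \emph{affine} function of $([\meanp]_1,[\meanp]_2)$, with $[\meanp]_1$-coefficient $\sum_n \MixingMatrix^\T\Sigma_\psi^{-1}\MY_n$ and $[\meanp]_2$-coefficient $-\tfrac12\,\MixingMatrix^\T\Sigma_\psi^{-1}\MixingMatrix$, where $\Sigma_\psi=\nBlkdiag{[\sigma^2_p]^{\NumTasks}_{p=1}}$. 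The same conclusion follows from Bonnet's and Price's theorems: since $\log p(\MY_n\mid\StackedF_n)$ is quadratic, its Hessian $\nabla^2_{\StackedF}\log p = -\MixingMatrix^\T\Sigma_\psi^{-1}\MixingMatrix$ is constant, so $\diff{\ELL}{\MS}$ and hence $\diff{\ELL}{\meanp}$ do not depend on $q$. This affineness is exactly what makes one step exact; in the non-conjugate case the Hessian depends on $\StackedF$ and the gradient would shift with $q$, forcing iteration.

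For (ii) I would compute $p(\StackedF\mid\MY)$ directly by Gaussian conjugacy. The prior has natural parameters $\priornatp=(\Zero,-\tfrac12\MK^{-1})$ with $\MK$ the block-diagonal Gram matrix of the $Q$ latents, and completing the square in $\StackedF$ shows the likelihood contributes exactly the natural parameters found in (i); their sum is the posterior. Matching this against $\diff{\ELL}{\meanp}+\priornatp$ and using that a Gaussian is determined by its natural parameters yields $q=p(\StackedF\mid\MY)$. I expect the main obstacle to be bookkeeping rather than concept: I must track the selection/permutation between the per-observation blocks $\StackedF_n$ and the full stacked vector (the latent-data versus data-latent layouts), and the change of coordinates between $(\vm,\MS)$ and $([\meanp]_1,[\meanp]_2)$, so that the chain-rule factors cancel and the per-point block-diagonal gradients aggregate into the correct full-dimensional natural parameters.
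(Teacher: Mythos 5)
Your proposal is correct and follows essentially the same route as the paper's proof: compute $\diff{\ELL}{\meanp}$, observe it is constant (the paper shows this by explicitly cancelling the $\vm$-dependence in the change from $(\vm,\MS)$ to expectation parameters, you show it by noting the \ELL is affine in $\meanp$), add the prior natural parameters, and match against the natural parameters of the exact Gaussian posterior (which the paper obtains via Woodbury and you via completing the square — the same quantities either way).
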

\begin{proof}
	To prove this we first derive the natural parameters of the posterior $p(\StackedF \mid \MY)$. We then derive the closed form expression of the natural parameter update and show that they recover that of the posterior. Let 
	\begin{equation}
		\MultiOutputF(\MX) = \MixingMatrixKron \, \StackedF(\MX) \sim \nN{\MultiOutputF(\MX)}{\Zero}{\TransW}
	\end{equation}
	where $\StackedGram{\MX}{\MX} = \nBlkdiag{[\MK_q]^{\NumLatents}_{q=1}}$ and $\MixingMatrixKron = \MixingMatrix \kron \Eye$ then the posterior $p(\MultiOutputF(\MX) \mid \MY)$ is Gaussian 
	\begin{equation}
	\begin{aligned}
		p(\MultiOutputF(\MX) \mid \MY) &= \nN{\MultiOutputF(\MX)}{\mu_{\MultiOutputF \mid \MY}}{\Sigma_{\MultiOutputF \mid \MY}} 
	\end{aligned}
	\end{equation}
	with
	\begin{equation}
	\begin{aligned}
		\mu_{\MultiOutputF \mid \MY} &= \left[ \TransW \right] \, \left[\TransW + \Phi \right]^{-1} \, \MY \\
		\Sigma_{\MultiOutputF \mid \MY} &= \left[\left[ \TransW \right]^{-1} + \Phi^{-1} \right]^{-1}.
	\end{aligned}
	\end{equation}
	The covariance matrix can be simplified by invoking Woodbury's identity twice
	\begin{equation}
	\begin{aligned}
		\Sigma_{\MultiOutputF \mid \MY} &= \left[\TransW\right] - \left[\TransW\right] \, \left[ \Phi +  \left[\TransW\right] \right]^{-1}\left[\TransW\right] \\
		&= \MixingMatrixKron \, \left[ \StackedGram{\MX}{\MX} - \StackedGram{\MX}{\MX} \, \MixingMatrixKron^\T \left[\Phi +  \TransW\right]^{-1} \, \MixingMatrixKron \, \StackedGram{\MX}{\MX} \right] \, \MixingMatrixKron^\T \\
		&= \MixingMatrixKron \, \left[\MixingMatrixKron \, \Phi^{-1} \, \MixingMatrixKron^\T+  \StackedGram{\MX}{\MX}^{-1} \right]^{-1} \, \MixingMatrixKron^\T
	\end{aligned}
	\end{equation}
	and the mean can be expressed as
	\begin{equation}
	\begin{aligned}
		\mu_{\MultiOutputF \mid \MY} &= \Sigma_{\MultiOutputF \mid \MY} \, \Phi^{-1} \MY \\
		&= \MixingMatrixKron \, \left[\MixingMatrixKron \, \Phi^{-1} \, \MixingMatrixKron^\T+  \StackedGram{\MX}{\MX}^{-1} \right]^{-1} \, \MixingMatrixKron^\T \, \Phi^{-1} \MY.
	\end{aligned}
	\end{equation}
	Now we can immediately read off the posterior $p(\StackedF \mid \MY)$ as $p(\MultiOutputF(\MX) \mid \MY) = p(\MixingMatrixKron\,\StackedF \mid \MY)$ is simply a transformed version
	\begin{equation}
	\begin{aligned}
		p(\StackedF \mid \MY) &= \nN{\StackedF}{\left[\MixingMatrixKron \, \Phi^{-1} \, \MixingMatrixKron^\T+  \StackedGram{\MX}{\MX}^{-1} \right]^{-1} \, \MixingMatrixKron^\T \, \Phi^{-1} \MY} {\left[\MixingMatrixKron \, \Phi^{-1} \, \MixingMatrixKron^\T+  \StackedGram{\MX}{\MX}^{-1} \right]^{-1}} \\
		&= \nN{\StackedF}{\mu_{\StackedF \mid \MY}}{\Sigma_{\StackedF \mid \MY}}
	\end{aligned}
	\end{equation}
	whose natural parameters are
	\begin{equation}
		\natp_{\StackedF \mid \MY} = \left[ 
			\MixingMatrixKron^\T \, \Phi^{-1} \MY, 
			-\frac{1}{2} \, \MixingMatrixKron \, \Phi^{-1} \, \MixingMatrixKron^\T -  \frac{1}{2} \, \StackedGram{\MX}{\MX}^{-1}
		\right]^\T.
	\end{equation}
	We now derive the closed form expression of the natural gradient update with a learning rate of $1$, and show that it recovers $\natp_{\StackedF \mid \MY}$. The expected log likelihood  (\ELL) is 
	\begin{equation}
	\begin{aligned}
		\text{\ELL} &= \nE{q(\StackedF)} {\log \nN{\MY}{\MixingMatrixKron \, \StackedF}{\Phi}} \\
		&= \log \nN{\MY}{\MixingMatrixKron \, \StackedF}{\Phi} - \frac{1}{2} \, \nTr{\Phi^{-1} \, \TransS}.
	\end{aligned}
	\end{equation}
	The required derivatives are
	\begin{equation}
	\begin{aligned}
		\diff{\ELL}{\vm} &= - \frac{1}{2} \, \diff{}{\vm} \left[ (\MY - \MixingMatrixKron \, \vm)^\T \, \Phi^{-1} \,  (\MY - \MixingMatrixKron \, \vm) \right] \\
		&=  \MixingMatrixKron^\T \, \Phi^{-1} \,  (\MY - \MixingMatrixKron \, \vm)
	\end{aligned}
	\end{equation}
	where the last follows because $\Phi$ is symmetric and
	\begin{equation}
	\begin{aligned}
		\diff{\ELL}{\MS} &= - \frac{1}{2} \, \diff{}{\MS} \left[ \nTr{\Phi^{-1} \, \TransS} \right] \\
		&= - \frac{1}{2} \, \TransPhiInv.
	\end{aligned}
	\end{equation}
	The natural gradient is now given as
	\begin{equation}
	\begin{aligned}
		\diff{\text{\ELL}}{\meanp_{\StackedF \mid \MY}} &= \left[ \begin{matrix} 
			\diff{\ELL}{\vm} - 2 \, \diff{\ELL}{\MS}^\T \, \vm \\
			\diff{\ELL}{\MS}
		\end{matrix}\right] = \left[ \begin{matrix} 
			\MixingMatrixKron^\T \, \Phi^{-1} \,  (\MY - \MixingMatrixKron \, \vm) - \MixingMatrixKron^\T \, \Phi^{-1} \, \MixingMatrixKron \, \vm \\
 			- \frac{1}{2} \, \TransPhiInv
		 \end{matrix}\right] \\
		 &= \left[ \begin{matrix} 
			\MixingMatrixKron^\T \, \Phi^{-1} \, \MY \\
 			- \frac{1}{2} \, \TransPhiInv
		 \end{matrix}\right].
	\end{aligned}
	\end{equation} 
	The natural gradient update with a learning rate of $1$ is 
	\begin{equation}
		\natp_{q(\StackedF)} =  \diff{\text{\ELL}}{\meanp_{\StackedF \mid \MY}} + \natp_{p(\StackedF)}
	\end{equation}
	where $\natp_{p(\StackedF)} = \left[\Zero, -\frac{1}{2} \MK^{-1}\right]^\T$ are the natural parameters of the prior $p(\StackedF)$, hence after the update the natural parameters are
	\begin{equation}
		\natp_{q(\StackedF)}  = \left[ \begin{matrix} 
			\MixingMatrixKron^\T \, \Phi^{-1} \, \MY \\
 			- \frac{1}{2} \, \TransPhiInv -\frac{1}{2} \MK^{-1}
		 \end{matrix}\right].
	\end{equation}
	which recover those of $p(\StackedF \mid \MY)$, and hence we recover the optimal posterior. 
\end{proof} %
\section{Further Experimental Details and Results}
\label{sec:app_experimental_details}

\EKS methods were run on CPUs. State-space methods running on GPU used the parallel form of the Kalman smoother (see \citep{sarkka_parallel_smoother:2021,hamelijnck_spatio_temporal:2021}).

\subsection{An extension of \AUTOIP} \label{sec:app_autoip_speedup}

If one drops the requirement for state-space representations then the approximations proposed in \cref{sec:reducing_spatial_complexity} directly define approximations to the variational \gp defined by \cref{eqn:general_vi_elbo}, and hence directly extend \AUTOIP. For example on the non-linear damped pendulum in \cref{sec:exp} we run this extension of \AUTOIP with whitening and $50$ inducing points for $C=1000$ and achieve an RMSE of $0.06 \pm 0.001$ and running time of $158.16 \pm 0.34$, clearly improving the running time against \AUTOIP. However the benefit of our methods is that \PIGP remains linear in temporal dimensions which is vital for applications that are highly structured in time \cite{hamelijnck_spatio_temporal:2021}.

\subsection{Modelling Unknown Physics} \label{sec:app_unknown_physics}

Modelling of missing physics can be handled by parameterising unknown terms with \gps. For example take a simple non-linear pendulum
\begin{equation}
	\frac{d^2 \theta}{dt^2} + \sin(\theta) = 0.
\label{eqn_app_lfm_pendulum}
\end{equation}
Now consider that the the $\sin(\theta)$ is unknown and we would like to learn it. If we define the our differential equation in \cref{eqn:phi_gp_general_pde} as
\begin{equation}
	g=\frac{d^2 \, f_1}{dt^2} + f_2(t) = 0
\end{equation}
where both $f_1(\cdot), f_2(\cdot)$ are latent \gps that we wish to learn. We now construct 300 observations for training from the solution of  \cref{eqn_app_lfm_pendulum} across the range $[0, 30]$ and $1000$ for testing. We run \PIGP and compare the similarity of the learnt latent \gp $f_2(\cdot)$ to the true function at the test locations and achieve an \RMSE of $0.068$ indicating we have recovered the latent force/unknown physics well.

\subsection{Monotonic Timeseries} \label{sec:app_exp_monotonic}

This first example showcases the effectiveness of \PIGP in learning monotonic functions. Monotonicity information is expressed by regularising the first derivative to be positive at a set of collocation points \citep{riihimaki_monotonic_gp:2010}:
\begin{equation*}
\begin{aligned}
	p(\MY \mid \vf) &= \nN{\MY}{\vf}{\sigma^2_y}, ~~ 
	p(\Zero \mid \diff{\vf}{t}) &= \Phi(\diff{\vf}{t} \cdot \frac{1}{v}) 
\end{aligned}
\end{equation*} 
where $\Phi(\cdot)$ is a Gaussian cumulative distribution function, and $v=1e-1$ is a tuning parameter that controls the steepness of the step function.  We plot predictive distributions of (batch) \gp and \PIGP in \cref{fig:monotonic_timeseries}. The \gp fits data and does not learn a monotonic function. However, using $300$ collocation points, \PIGP is able to include the additional information and learn a monotonic function whilst running $1.5$ times faster.
\begin{figure}[!h]
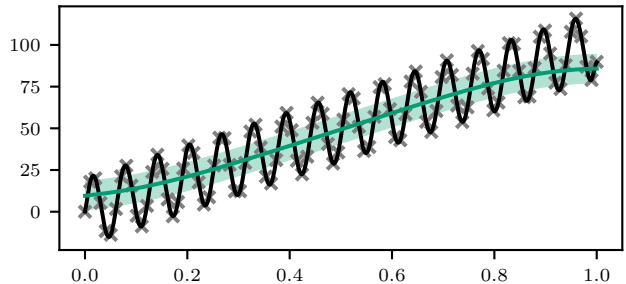

	\centering
	\inputpgf{figs/monotonic_timeseries}{monotonic_timeseries.pgf}
	\caption{Predictive distributions of \gp and \PIGP on the monotonic function in \cref{sec:app_experimental_details}. The \gp cannot incorporate monotonicity information and fits the data.}
	\label{fig:monotonic_timeseries}
\end{figure}

\subsection{Non-linear Damped Pendulum} \label{sec:app_pendulum}

All models were run using an Nvidia Titan RTX GPU and an Intel Core i5 CPU. All were optimised for $1000$ epochs using Adam \citep{kingma_adam:2014} with a learning rate of $0.01$. Both the \gp and \AUTOIP had an \RBF kernel (following \citet{long_autoip:2022}) and \PIGP used a \MaternSevenTwo; all with a lengthscale of $1.0$. The observation noise was initialised to $0.01$ and the collocation $0.001$. Both were fixed for the first $40\%$ of training and then released. Predictive distribution of \PIGP and \AUTOIP are plotted in \cref{fig:app_exp_pendulum}.
\begin{figure}
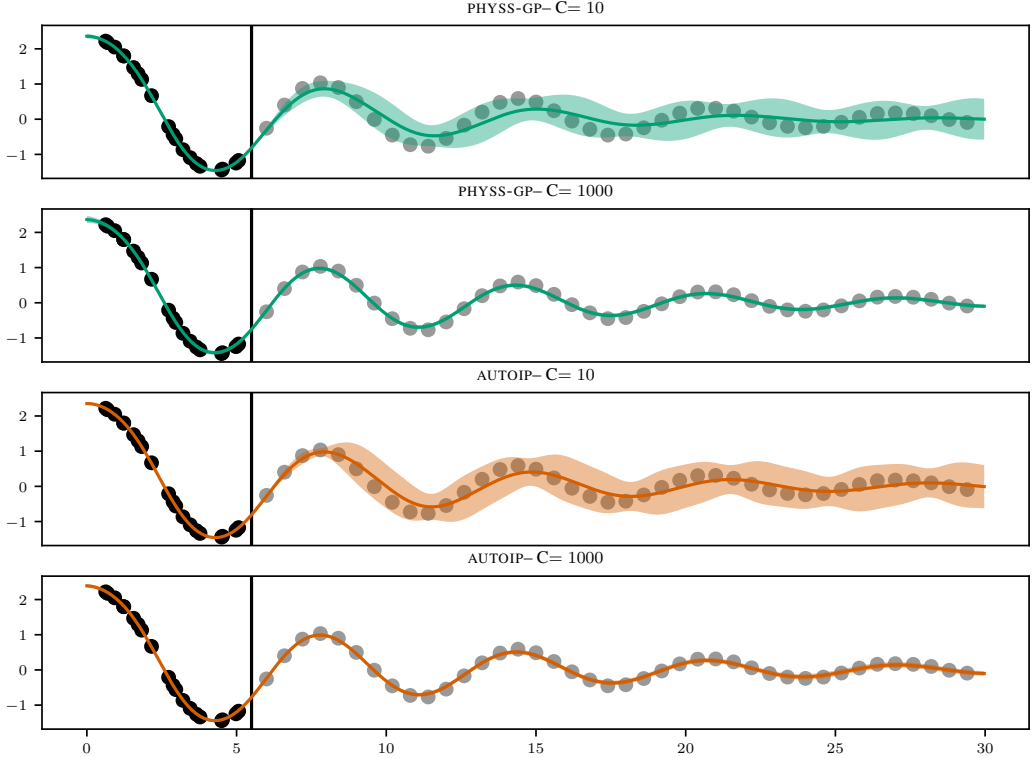

	\resizeinpgf{figs/pendulum}{pendulum.pgf}
	\caption{Predictive distributions on the Damped Pendulum.}
	\label{fig:app_exp_pendulum}
\end{figure}

\subsection{Curl-free Magnetic Field Strength} 

All models were run using an Nvidia Titan RTX GPU and an Intel Core i5 CPU. All models are run for $5000$ epochs using Adam with a learning rate of $0.01$, and use a \MaternThreeTwo kernel on time, with ARD \RBF kernels on the spatial dimensions, with a lengthscale of $0.1$ across all. The Gaussian likelihood is initialised with a variance of $0.01$ and held for $40\%$ of training. All our methods used a natural gradient learning rate of $1.0$ as this is the conjugate setting. 

\subsection{Diffusion-Reaction System} 
We use data provided by \citep{raissi_pinn:2019} under an MIT license. All models were run using an Nvidia Titan RTX GPU and an Intel Core i5 CPU. Our method \sPIGP and \hsPIGP use a \Matern72 kernel on time and an \RBF of space, both initialised with a lengthscale of $0.1$. We place the collocation points on a regular grid of size $20 \times 10$ and use $\Ms=20$ spatial inducing points. We pretrain for $100$ iterations using a natural gradient learning rate of $0.01$ and after use a learning rate $0.1$ for the remaining $19000$ iterations.  \AUTOIP uses a \RBF kernel on both time and space with a lengthscale of $0.1$. We place the collocation points on a regular grid of size $10 \times 10$. All models use Adam with a learning rate $0.001$ and train for a total of $20000$ iterations. 

\subsection{Ocean Currents}
Our method \hsPIGP was run using an Nvidia Titan RTX GPU and an Intel Core i5 CPU. We ran for $10000$ iterations, using Adam with a learning rate of $0.01$. For natural gradients with used a learning rate of $0.1$. We used a \MaternThreeTwo kernel on time and \RBF kernels on both spatial dimensions with lengthscales $[24.0, 1.0, 1.0]$. We used $100$ spatial inducing points and a spatial mini-batch size of $10$.

\end{document}